\newtheorem{thm}{Theorem}
\newtheorem{lem}{Lemma}
\newtheorem{prop}{Proposition}
\newtheorem{cor}{Corollary}
\theoremstyle{definition}
\newtheorem{defn}{Definition}
\newtheorem{rem}{Remark}
\newcommand{\bx}{\bm{x}}
\newcommand{\bv}{\bm{v}}
\newcommand{\bz}{\bm{z}}
\newcommand{\br}{\bm{r}}
\newcommand{\bo}{\bm{o}}
\newcommand{\bI}{\mathbf{I}}
\newcommand{\bA}{\mathbf{A}}
\newcommand{\bB}{\mathbf{B}}
\newcommand{\bU}{\mathbf{U}}
\newcommand{\bX}{\mathbf{X}}
\newcommand{\bV}{\mathbf{V}}
\newcommand{\bK}{\mathbf{K}}
\newcommand{\btheta}{\pmb{\theta}}
\newcommand{\bmu}{\pmb{\mu}}
\newcommand{\bSigma}{\pmb{\Sigma}}
\newcommand{\bDelta}{\pmb{\Delta}}
\newcommand{\rU}{\textnormal{U}}
\newcommand{\rV}{\textnormal{V}}
\newcommand{\rX}{\textnormal{X}}
\newcommand{\rN}{\textnormal{N}}
\newcommand{\ones}{\bm{1}}
\newcommand{\Reals}{\mathbb{R}}
\newcommand{\defined}{\triangleq}
\newcommand{\dif}{\textrm{d}}
\newcommand{\EEE}[2]{\mathbb{E}_{#1}\left[ #2 \right]}
\newcommand{\renyi}{\textnormal{D}_{\alpha}}
\newcommand{\fdiv}{\textnormal{D}_{f}}
\newcommand{\hatsfdiv}{\widehat{\textnormal{SD}}_{f,k,\sigma^2}}
\newcommand{\sfdiv}{\textnormal{SD}_{f,k,\sigma^2}}
\newcommand{\estfdiv}{\hat{\textnormal{D}}_{f}}
\title{Privacy without Noisy Gradients: \\Slicing Mechanism for Generative Model Training
}
\author{
Kristjan Greenewald\\
MIT-IBM Watson AI Lab, IBM Research \\
\texttt{kristjan.h.greenewald@ibm.com} \\
\And
Yuancheng Yu\\
UIUC\\
\texttt{yyu51@illinois.edu} \\
\And
Hao Wang\\
MIT-IBM Watson AI Lab, IBM Research\\
\texttt{hao@ibm.com} \\
\And
Kai Xu\\
MIT-IBM Watson AI Lab, IBM Research \\
\texttt{xuk@ibm.com}
}
\begin{document}

\maketitle

\begin{abstract}
Training generative models with differential privacy (DP)  typically involves injecting noise into gradient updates or adapting the discriminator's training procedure. As a result, such approaches often struggle with hyper-parameter tuning and convergence. 
We consider the \emph{slicing privacy mechanism} that injects noise into random low-dimensional projections of the private data, and provide strong privacy guarantees for it. These noisy projections are used for training generative models.
To enable optimizing generative models using this DP approach, we introduce the \emph{smoothed-sliced $f$-divergence} and show it enjoys statistical consistency.  
Moreover, we present a kernel-based estimator for this divergence, circumventing the need for adversarial training. 
Extensive numerical experiments demonstrate that our approach can generate synthetic data of higher quality compared with baselines. Beyond performance improvement, our method, by sidestepping the need for noisy gradients, offers data scientists the flexibility to adjust generator architecture and hyper-parameters, run the optimization over any number of epochs, and even restart the optimization process---all without incurring additional privacy costs.
\end{abstract}

\section{Introduction}

Just as oil fueled the industrial revolution, data now propels innovation and progress in today's digital life. However, data sharing faces challenges due to privacy risks and regulatory policies (e.g., GDPR, CCPA, FTC). Synthetic data offers a promising solution as it closely resembles real data, retains its formats and essential properties, and can be seamlessly integrated into existing workflows. Nonetheless, modern generative models are vulnerable to privacy attacks that could expose individuals' sensitive information in the original data \citep{hayes2017logan,chen2020gan,stadler2022synthetic}. Today, differential privacy (DP) \citep{dwork2014algorithmic} stands as the de facto standard for privacy protection and it plays a crucial role in guiding the design of synthetic data. For example, DP was used in the release of microdata from Israel's National Registry of live births in 2014 to protect the privacy of mothers and newborns \citep{hod2024differentially}, as well as in the release of global victim-perpetrator data to ensure the privacy of victims \citep{IOM2022global}. 
Additionally, a recent publication \citep{near2023guidelines} by the National Institute of Standards and Technology (NIST) recommends the use of DP algorithms in generating synthetic data to provide robust privacy protection against rapid developments in privacy attacks.

Existing approaches to training generative models often integrate DP by injecting noise into gradient updates or adapting the discriminator's training procedure \citep{xie2018differentially,torkzadehmahani2019dp,cao2021don,takagi2021p3gm,beaulieu2019privacy,jordon2019pate,tantipongpipat2019differentially,bie2023private,nieminen2023empirical,lin2020using,xu2022synthetic,chen2020gs}. 
They provide several benefits, such as the ability to generate diverse data types (including continuous data, time-series, and images), scalability to high-dimensional data, and accelerated runtime using GPUs. 
However, fine-tuning hyper-parameters within these frameworks can be challenging \citep{neunhoeffer2020private,papernot2022hyperparameter,liu2019private}. Additionally, they encounter a dilemma when determining the number of training epochs: with a fixed privacy budget, increasing the number of epochs requires injecting more noise into gradient update per iteration, while fewer epochs might not be sufficient for the optimizer to converge. In such a setting, if the training does not converge, the only recourse may be to increase the privacy budget. 
This motivates a fundamental question:
\begin{center}
\emph{How can we train generative models with DP guarantees while ensuring}\\ 
\emph{easy fine-tuning, stable convergence, and high utility?}
\end{center}

%
In this paper, we introduce a new learning paradigm for training privacy-preserving generative models. 
Our approach decouples the training process into two steps: (i) computing noisy low-dimensional projections of the private data along random directions, and (ii) updating the generative models to fit these noisy projections. 
We establish DP guarantees for the first step and leverage the random projection to further tighten our privacy bound. This decoupling strategy offers several advantages. 
The post-processing property of DP ensures that any deep learning techniques can be applied in the second step. 
In other words, our approach is model-agnostic, allowing for smooth integration into existing training pipelines of generative models. 
With our method, data scientists have the flexibility to adjust generator architecture and hyper-parameters, optimize the generative model for any number of epochs, and even restart the optimization---all without worrying about additional privacy costs.

Based on this paradigm, we are motivated to introduce a new information-theoretic measure: the \emph{smoothed-sliced $f$-divergence}. This divergence (randomly) projects the original and synthetic data distributions onto lower-dimensional spaces, followed by smoothing with isotropic Gaussian noise, and averaging their $f$-divergence over all projections. We prove that using this divergence as the loss function in generative model training is equivalent to the aforementioned two-step training process. 
Additionally, we present a kernel-based, differentiable estimator for this divergence. It circumvents the need for adversarial training in generative models, thereby enhancing convergence stability and robustness to different choices of hyper-parameters. 
Finally, we establish the statistical consistency of training generative models using this divergence.

%
In terms of the slicing mechanism, we build upon the work of \cite{rakotomamonjy2021differentially}. They introduced the smoothed-sliced Wasserstein distance and applied it to generative models and domain adaptation tasks. 
However, their approach is limited to 1-dimensional projection spaces ($k=1$), exploiting the closed-form expression of Wasserstein distances in 1D. Moreover, their privacy analysis contains a significant flaw in its derivation (see Remark~\ref{rem::slice_mech} for detailed discussions). 
In contrast, we propose a generic framework for training privacy-preserving generative models, applicable to any $k$-dimensional projections. Empirically, we observe that setting the projection dimension to a small number (e.g., $k=2, 3$) significantly improves the quality of synthetic data compared with $k=1$ under the same privacy budget. Additionally, our kernel-based estimator eliminates the need for adversarial training in generative models, facilitating stable convergence. 
Finally, we present a completely new proof for establishing the DP guarantees of our framework. This analysis also applies to any smooth-sliced divergence objective, for instance, we provide corrected (and improved!) DP bounds for the smoothed-sliced Wasserstein framework of \cite{rakotomamonjy2021differentially}.

We demonstrate the effectiveness of our method through  numerical experiments. We compare generative models trained using our method with those trained by standard privacy mechanisms (DP-SGD, PATE, or smoothed-sliced Wasserstein distance) among various real-world datasets. 
The results indicate that our approach consistently produces synthetic data of higher quality compared with baseline methods. 
%

In summary, our main contributions are:
\begin{itemize}[leftmargin=2em]

    \item We introduce a new framework for training privacy-preserving generative models. It offers easy hyper-parameter tuning and allows for optimizing generative models over any number of training epochs without extra privacy costs. 
   
    \item We propose a new information-theoretic divergence and provide a kernel-based estimator. This estimator enables the training of generative models without relying on adversarial training, enhancing convergence stability.
    
    \item For the slicing mechanism, we extend the work of \cite{rakotomamonjy2021differentially} and allow for projecting data onto any $k$-dimensional space. Using an entirely new proof technique, we provide DP guarantees, which in the $k=1$ setting correct (and strengthen!) the privacy analysis in \cite{rakotomamonjy2021differentially}.

    \item We validate our method through numerical experiments. The results show that our method produces synthetic data of higher quality compared with baselines.
\end{itemize}

\subsection*{Additional Related Work}

Recent work has proposed alternative approaches for training DP generative models without adversarial networks. For example, \cite{cao2021don} considered using the Sinkhorn divergence as the loss function, but their method adds noise to gradient updates to ensure DP, which leads to the challenges discussed in the introduction. Another line of work \cite{harder2021dp,vinaroz2022hermite,harder2023pre,yang2023differentially} used the maximum mean discrepancy (MMD) to train DP generative models. They inject noise into the embedding of the private data distribution to maintain DP. However, minimizing their loss function to zero does not guarantee perfect matching between the synthetic and real data distributions, due to either not using a characteristic kernel or the approximation errors stemming from using finite-dimensional feature mappings to approximate the kernel function. In contrast, Proposition~\ref{prop::SD_zero} proves that our smoothed-sliced $f$-divergence equals zero iff the synthetic and real distributions are identical; Corollary~\ref{cor:consist} establishes the statistical consistency of training generative models using our loss function. Additionally, we amplify our privacy bound by leveraging random projections in Theorem~\ref{thm:diffpriv}.

There is significant research introducing DP mechanisms tailored to generating tabular synthetic data \citep[e.g.,][]{zhang2017privbayes,mckenna2019graphical,mckenna2021winning,mckenna2022aim,gaboardi2014dual,vietri2020new,aydore2021differentially,zhang2021privsyn,liu2021iterative,vietri2022private,donhauser2024privacy}. They select a set of workload queries (e.g., low-order marginal queries) and generates synthetic data to minimize approximation errors on these queries. These methods often maintain statistical properties of the original data with high accuracy, particularly for the selected workload queries; downstream predictive models trained on such synthetic data often achieve high performance when deployed on real data \citep{tao2021benchmarking,wang2023postprocessing}. However, they only apply to categorical features\footnote{There are a few exceptions \citep[e.g.,][]{liu2023generating,vietri2022private} but they only generate a pre-determined number of synthetic samples, rather than providing a generative model, as ours does.}, rely on special generative model architectures, or struggle to scale effectively to high-dimensional data. More broadly, there are several works analyzing DP synthetic data from a theoretical perspective or investigating other properties of synthetic data (e.g., missing values) \citep{nikolov2013geometry,ullman2020pcps,ge2020kamino,cai2021data,boedihardjo2022private,boedihardjo2022privacy,ridgeway2021challenge,mohapatra2023differentially}.

A burgeoning line of work has explored slicing and smoothing to improve sample complexity in estimating divergence and optimal transport measures. These measures often suffer from extreme curses of dimensionality (e.g., $n^{-1/d}$ for Wasserstein distance \cite{dudley1969speed}). Previous studies have shown that both slicing \cite{rabin2011wasserstein,vayer2019sliced,lin2021projection,nadjahi2020statistical,goldfeld2021sliced} and smoothing \cite{goldfeld2020convergence,goldfeld2020gaussian} facilitate convergence at the parametric rate for both $f$-divergences and optimal transport distances, while preserving key properties of the original divergence (e.g., being zero iff the two distributions are identical). These modified divergences are also used as objective functions for training generative models. In our context, the use of smoothed-sliced divergence is less motivated by sample complexity (indeed using both smoothing and slicing would be unnecessary for achieving the parametric rate). Instead, it is motivated by the slicing privacy mechanism. Also, while we enjoy the sample complexity benefits of slicing, we do not benefit from the sample complexity improvements of smoothing \cite{goldfeld2020asymptotic}. This is because achieving DP requires injecting a finite number (typically just one) of noise realizations per data point. Full smoothing results are still useful, however, for our consistency results in the asymptotic regime. 

While the 1-dimensional sliced Wasserstein distance can be computed via a simple sorting algorithm \cite{deshpande2018generative}, there is in general no closed-form sample-based estimator for $f$-divergence, even in one dimension. While nearest neighbor \cite{wang2009divergence} and kernel-based \cite{moon2018ensemble} estimators do exist, they often suffer from scalability issues and are not friendly to gradient-based optimization. Hence, it is a common practice to use dual forms of $f$-divergence in deriving adversarial-based training procedures for generative models \cite{nowozin2016f}. In the present work, we avoid this costly adversarial training using our moment matching estimator.

The supplementary material of this paper includes: (i) omitted proofs of all theoretical results and (ii) supporting experimental results.

\section{Preliminaries and Problem Setup}
In this section, we review the concepts of differential privacy (DP), $f$-divergence, and a moment matching method used for estimating $f$-divergence.

\subsection{Differential Privacy}
We denote the original dataset as a matrix $\bX \in \Reals^{n\times d}$, where $n$ denotes the number of records and $d$ represents the number of real-valued features per record. 
\begin{defn}[Dataset adjacency]
\label{def:1}
Two datasets $\bX$ and $\bX'$ are considered adjacent if they differ in a single row, say the $i$-th row, such that $\|\bX_{i,:} - \bX'_{i,:}\|_2 \leq 1$ where $\bX_{i,:}$ and $\bX'_{i,:}$ are the $i$-th row of $\bX$ and $\bX'$, respectively. 
\end{defn}
Next, we recall the definition of differential privacy (DP) \citep{dwork2014algorithmic}.
\begin{defn}[$(\epsilon,\delta)$ differential privacy]
\label{def:2}
A randomized mechanism $\mathcal{M}: \Reals^{n\times d} \to \mathbb{O}$ satisfies $(\epsilon,\delta)$-differential privacy if for any adjacent datasets $\bX$, $\bX'$ and all possible outcomes $\mathbb{S} \subseteq \mathbb{O}$, we have:
\begin{align*}
    \Pr(\mathcal{M}(\rX) \in \mathbb{S})
    \leq \exp(\epsilon)\Pr(\mathcal{M}(\rX') \in \mathbb{S}) + \delta. 
\end{align*}
\end{defn}
DP has many compelling properties. The post-processing property states that if a mechanism $\mathcal{M}$ is $(\epsilon, \delta)$-DP, its outcome remains $(\epsilon, \delta)$-DP even after applying a (potentially randomized) function; the basic composition rule states that given a sequence of mechanisms $\mathcal{M}_1, \dots \mathcal{M}_k$, if $\mathcal{M}_i$ is $(\epsilon_i, \delta_i)$-DP, then their composition $\mathcal{M}(\mathcal{D}) = (\mathcal{M}_1(\mathcal{D}), \dots, \mathcal{M}_k(\mathcal{D}))$ will satisfy $(\sum_{i=1}^k \epsilon_i , \sum_{i=1}^k \delta_i)$-DP.

\subsection{f-divergence and Moment Matching}
We first recall the definition of $f$-divergence \citep[Chapter~7 in][]{polyanskiy2022information}.
\begin{defn}
Let $f: (0, \infty) \to \Reals$ be a convex function with $f(1) = 0$ and $f(0) \defined f(0+)$. Let $P_{\rX}$ and $Q_{\rX}$ be two probability distributions on $\mathcal{X}$. If $P\ll Q$, then the $f$-divergence is defined as $\fdiv(P\|Q) \defined \EEE{Q}{f\left(\frac{\dif P}{\dif Q}\right)}$ where $\frac{\dif P}{\dif Q}$ is a Radon-Nikodym derivative. Additionally, we denote the density ratio by $r(x) \defined \frac{\dif P}{\dif Q}(x)$.
\end{defn}
$f$-divergences have many nice properties. For example, it is always non-negative; and assuming $f$ is strictly convex at $1$, then $\fdiv(P\|Q) = 0$ if and only $P=Q$.

There is a burgeoning field of research focusing on $f$-divergence estimation \citep[see e.g.,][]{goldfeld2020convergence,wu2020polynomial,srivastava2020generative}. Here we revisit a framework based on kernel mean matching, a special instance of moment matching methods \citep[Chapter~3 in][]{sugiyama2012density}. Given a reproducing kernel $\mathsf{K}(\bx,\bx')$, one can solve the following optimization problem to estimate the density-ratio function:
\begin{align*}
    \min_{r\in\mathcal{R}} \left\|\int \mathsf{K}(\bx, \cdot)  \dif P(\bx) - \int \mathsf{K}(\bx, \cdot) r(\bx) \dif Q(\bx) \right\|_{\mathcal{R}}^2,
\end{align*}
where $\|\cdot\|_{\mathcal{R}}$ denotes the norm of a reproducing kernel Hilbert space $\mathcal{R}$. One example of reproducing kernels is the Gaussian kernel $\mathsf{K}(\bx,\bx') = \exp\left(\frac{-\|\bx-\bx'\|_2^2}{2\sigma_g^2}\right)$. Given $\{\bx_i^p\}_{i=1}^{n_p}$ drawn from $P$ and $\{\bx_j^q\}_{j=1}^{n_q}$ drawn from $Q$, we can optimize an empirical version of the above optimization to obtain an analytical solution:
\begin{align*}
\hat{\br}_{q} = \frac{n_q}{n_p} \bK^{-1}_{q,q} \bK_{q,p} \ones_{n_p}, 
\end{align*}
where $\hat{\br}_{q} \in \Reals^{n_q}$ is the (empirically) optimal density ratio values at samples drawn from $Q$, $\bK_{q,q} \in \Reals^{n_q \times n_q}$ and $\bK_{q,p} \in \Reals^{n_q\times n_p}$ are the kernel Gram matrices:
\begin{align*}
    [\hat{\br}_{q}]_j = \hat{r}\left(\bx^q_j\right), \quad
    [\bK_{q,q}]_{j,j'} = \mathsf{K}\left(\bx^q_j, \bx^q_{j'}\right), \quad 
    [\bK_{q,p}]_{j,i} = \mathsf{K}\left(\bx^q_j, \bx^p_{i}\right).
\end{align*}
We extend the definition of $f$ to a vector by applying it to each element of the vector. Then we have
\begin{align}
\label{eq::fdiv_est}
    \estfdiv(P\|Q)
    = \frac{1}{n_q} \ones_{n_q}^T f(\hat{\br}_{q})
    = \frac{1}{n_q} \ones_{n_q}^T f\left(\frac{n_q}{n_p} \bK^{-1}_{q,q} \bK_{q,p} \ones_{n_p}\right).
\end{align}

\section{Main Results}

We introduce a new information-theoretic measure---the smoothed-sliced $f$-divergence. We show that this divergence can measure the difference between distributions, with only access to noised $k$-dimensional slices of the distributions. 
This finding motivates a new DP mechanism: the $k$-\emph{slicing privacy mechanism}. 
Finally, using the non-adversarial estimator~\eqref{eq::fdiv_est}, we apply the smoothed-sliced $f$-divergence as a new loss function to train privacy-preserving generative models.

\subsection{Smoothed-sliced f-divergence}
We start with giving a formal definition of the smoothed-sliced $f$-divergence.
\begin{defn}
Denote the Stiefel manifold of $d \times k$ matrices with orthonormal columns by $\mathbb{S}_k(\Reals^d)$. Let $\Theta \sim \mathrm{Unif}(\mathbb{S}_k(\Reals^d))$ and $\rN \sim \mathcal{N}(\bm{0}, \sigma^2 \bI_{k})$. The smoothed-sliced $f$-divergence between distributions $P_{\rX}$ and $Q_{\rX}$ on $\Reals^d$ is defined as
\begin{align*}
    \sfdiv(P_{\rX}\|Q_{\rX})
    &\defined \fdiv(P_{\Theta^T \rX + \rN | \Theta} \| Q_{\Theta^T \rX + \rN | \Theta} | P_{\Theta})\\
    &= \frac{1}{\rm{vol}(\mathbb{S}_k(\Reals^d))}\int_{\btheta \in \mathbb{S}_k(\Reals^d)} \fdiv(P_{\btheta^T \rX + \rN} \| Q_{\btheta^T \rX + \rN}) \dif \btheta.
\end{align*}
\end{defn}
Next, we discuss some basic properties of this new divergence.
\begin{prop}
\label{prop::SD_zero}
The smoothed-sliced $f$-divergence is non-negative:  $\sfdiv(P_{\rX}\|Q_{\rX}) \geq 0$ for any $k \geq 1$ and $\sigma \geq 0$. If $f$ is strictly convex at $1$ and $P_{\rX},Q_{\rX}$ have moment generating functions, then $\sfdiv(P_{\rX}\|Q_{\rX}) = 0$ if and only if $P_{\rX} = Q_{\rX}$. 
\end{prop}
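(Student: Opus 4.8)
The plan is to dispose of non-negativity and the forward implication quickly, then concentrate on recovering the full-dimensional distribution from its noisy slices. For non-negativity, observe that $\sfdiv$ is by definition an average, over the (normalized) uniform measure on $\mathbb{S}_k(\Reals^d)$, of the quantities $\fdiv(P_{\btheta^T\rX+\rN}\|Q_{\btheta^T\rX+\rN})$, each of which is non-negative since every $f$-divergence is; hence $\sfdiv\ge 0$ for all $k\ge 1$ and $\sigma\ge 0$. If $P_{\rX}=Q_{\rX}$, then for every $\btheta$ the laws of $\btheta^T\rX+\rN$ under $P$ and $Q$ coincide, so each integrand vanishes and $\sfdiv=0$; this settles the easy direction of the equivalence.

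For the converse, suppose $\sfdiv(P_{\rX}\|Q_{\rX})=0$. Since the integrand is non-negative and integrates to zero against a finite measure, it must vanish for almost every $\btheta$. Each smoothed slice is a convolution with a full-support Gaussian, so the two slice laws are mutually absolutely continuous and the divergence is finite; invoking strict convexity of $f$ at $1$ together with the recalled characterization $\fdiv(P\|Q)=0\Leftrightarrow P=Q$, I conclude $P_{\btheta^T\rX+\rN}=Q_{\btheta^T\rX+\rN}$ for almost every $\btheta$.

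Next I would remove the noise through characteristic functions. By independence of $\rX$ and $\rN$, the characteristic function of $\btheta^T\rX+\rN$ factorizes as $\varphi_{\rX}(\btheta\bt)\,e^{-\sigma^2\|\bt\|^2/2}$, where $\varphi_{\rX}$ is the characteristic function of the $d$-dimensional law and $\btheta\bt\in\Reals^d$. Equality of the two slice laws forces equality of these characteristic functions, and since the Gaussian factor never vanishes it cancels, leaving $\varphi^P_{\rX}(\btheta\bt)=\varphi^Q_{\rX}(\btheta\bt)$ for all $\bt\in\Reals^k$ and almost every $\btheta$. The moment-generating-function hypothesis enters precisely here: it guarantees that the transform determines the law, so it suffices to prove these two characteristic functions agree on all of $\Reals^d$.

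The step I expect to be the main obstacle is upgrading the almost-everywhere agreement in $\btheta$ to agreement along every direction in $\Reals^d$. The naive attempt—given a target $\bv\in\Reals^d$, pick $\btheta,\bt$ with $\btheta\bt=\bv$—fails, since the frames realizing a fixed $\bv$ form a measure-zero submanifold of $\mathbb{S}_k(\Reals^d)$ that need not meet the full-measure good set. My remedy is to fix $\bt$ and exploit continuity of the map $\btheta\mapsto\varphi^P_{\rX}(\btheta\bt)-\varphi^Q_{\rX}(\btheta\bt)$: it vanishes on a set of full measure, which is dense, so by continuity it vanishes for every $\btheta$. Taking $\bt=\|\bv\|\,\be_1$ and $\btheta$ any frame whose first column is $\bv/\|\bv\|$ realizes an arbitrary $\bv=\btheta\bt$, whence $\varphi^P_{\rX}\equiv\varphi^Q_{\rX}$ on $\Reals^d$ and therefore $P_{\rX}=Q_{\rX}$. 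The MGF assumption also offers an alternative analytic-continuation route for this covering step, by forcing $\varphi_{\rX}$ to be real-analytic and hence determined by its restriction to the slices.
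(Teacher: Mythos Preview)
Your argument is correct and follows the same overall architecture as the paper's proof: non-negativity of the integrand, the easy forward direction, and for the converse, almost-everywhere equality of the noisy slice laws followed by deconvolution and recovery of the full law from its one-dimensional sections. The substantive difference is in the transform you use: the paper works with moment generating functions, whereas you work with characteristic functions. Your upgrade from almost-everywhere-in-$\btheta$ to everywhere (fix $\bt$, use that a full-measure subset of the Stiefel manifold is dense, then invoke continuity of $\btheta\mapsto\varphi_{\rX}(\btheta\bt)$) is more explicit than the paper's one-line density remark, which implicitly relies on continuity of the MGF and hence on its existence. In fact your route does \emph{not} need the MGF hypothesis at all: characteristic functions are always continuous and always determine the law, so your sentence ``the moment-generating-function hypothesis enters precisely here: it guarantees that the transform determines the law'' is mistaken. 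What you have actually written is a proof of the proposition without the MGF assumption, which is strictly stronger than what the paper proves; the MGF hypothesis is genuinely used only in the paper's MGF-based version of the covering step.
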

Given a set of real data $\{\bx_i\}_{i=1}^n$ (i.e., rows of $\bX$) and synthetic data $\{\bx^{\text{syn}}_i\}_{i=1}^{n_{\text{syn}}}$, let $\estfdiv$ denote any estimator of $k$-dimensional $f$-divergence (e.g., \eqref{eq::fdiv_est}). We draw random directions $\btheta_{s}$ and additive noise $\bv_{s,i}$ and $\bar{\bv}_{s,i}$. Then we can estimate the smoothed-sliced $f$-divergence by
\begin{align}
\label{eq::SDloss}
    \hatsfdiv(P_{\rX^{\text{syn}}}\|P_{\rX}) 
    = \frac{1}{m} \sum_{s=1}^m \estfdiv\left(\{\bx^{\text{syn}}_i \btheta_s + \bar{\bv}_{s,i}\}_{i=1}^{n_{\text{syn}}}\| \{\bx_i \btheta_s + \bv_{s,i}\}_{i=1}^{n}\right).
\end{align}
As $\hatsfdiv$ is an empirical average of $m$ estimates of $k$-dimensional divergences, this estimator will inherit any statistical convergence bounds that apply to the chosen $\estfdiv$.

\subsection{Slicing Privacy Mechanism}
The loss function in \eqref{eq::SDloss} accesses the original data solely through their noisy projections along random directions. This observation motivates the following $k$-slicing privacy mechanism.
\begin{defn}
\label{defn::slicing_mech}
Let $k$ denote the dimension of the random projections, and $m$ denote the number of them, yielding $m' = m k$. Let $\bX \in \mathbb{R}^{n \times d}$ represent the original dataset where we assume $\|\bX_{i,:}\|_2 \leq 1$ for all $i$. Let $\bU \in \mathbb{R}^{d \times m'}$ and $\bV\in \mathbb{R}^{n \times m'}$ be random slicing and noise matrices with each element independently drawn from $\rU_{i,j} \sim \mathcal{N}(0, d^{-1})$ and $\rV_{i,j} \sim \mathcal{N}(0,\sigma^2)$, respectively. The slicing privacy mechanism is defined as
\begin{align}
\label{eq:mechanism}
    \mathcal{M}(\bX) \defined (\bU, \bX \bU + \bV).
\end{align}
\end{defn}
\begin{rem}
\label{rem::slice_mech}
Our mechanism outputs the random slicing matrix $\bU$, as it will be used to project the synthetic data onto the same spaces during the training of generative models. In contrast, \citep{rakotomamonjy2021differentially} did not include $\bU$ in their privacy mechanism, leading them to give an incorrect derivation of the privacy guarantee. Similarly, our approach differs from using the Johnson-Lindenstrauss transform to preserve DP \citep{blocki2012johnson, eliavs2020differentially}, as these methods do not reveal the random matrix.
\end{rem}
In Definition~\ref{defn::slicing_mech}, we draw random projections from a Gaussian distribution instead of uniformly from the Stiefel manifold. This choice simplifies the sampling procedure and streamlines our privacy analysis by making the the mechanism output jointly Gaussian with zero mean, conditioned on the data. Next, we establish the DP guarantees for our noisy slicing mechanism. 
\begin{thm}
\label{thm:diffpriv}
Assume $\gamma \defined \sigma^{-2} (\alpha^2 - \alpha) < d$. For all $\delta \in (0,1)$ and $\alpha > 1$, the mechanism $\mathcal{M}(\bX) = (\bU, \bX \bU + \bV)$ in Definition~\ref{defn::slicing_mech} satisfies
\begin{align*}
\left(\frac{m' \alpha}{2\sigma^2 (d - \gamma)} + \frac{\ln(1/\delta)}{\alpha-1}, \delta\right)-\text{DP}.
\end{align*}
where $(\epsilon,\delta)$-DP is as defined in Definition~\ref{def:2} with dataset adjacency specified in Definition~\ref{def:1}. 
\end{thm}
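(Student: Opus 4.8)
The plan is to establish a Rényi differential privacy (RDP) bound and then convert it to $(\epsilon,\delta)$-DP. The additive term $\ln(1/\delta)/(\alpha-1)$ appearing in the statement is exactly the standard RDP-to-DP conversion penalty, so the substance of the proof is to show that $\mathcal{M}$ is $(\alpha,\epsilon_\alpha)$-RDP with $\epsilon_\alpha = m'\alpha/\bigl(2\sigma^2(d-\gamma)\bigr)$; that is, to bound $\renyi\bigl(\mathcal{M}(\bX)\,\|\,\mathcal{M}(\bX')\bigr)$ uniformly over adjacent $\bX,\bX'$.

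First I would condition on the slicing matrix $\bU$. Since the marginal law of $\bU$ does not depend on the data, it is identical under $\bX$ and $\bX'$ and therefore cancels in the likelihood ratio: the ratio $\dif P_{\mathcal{M}(\bX)}/\dif P_{\mathcal{M}(\bX')}$ evaluated at $(\bU,\bm{Y})$, with $\bm{Y}\defined \bX\bU+\bV$, equals the ratio of the conditional laws of $\bm{Y}$ given $\bU$. Writing $\bw^T\defined \bX_{i,:}-\bX'_{i,:}$ for the single differing row (so $\|\bw\|_2\le 1$), conditionally on $\bU$ the matrix $\bm{Y}$ is Gaussian with per-row covariance $\sigma^2\bI_{m'}$, and the two conditional laws differ only in the mean of row $i$, by the shift $\bw^T\bU$; all other rows are identically distributed and contribute nothing to the divergence. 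The closed-form Rényi divergence between two equal-covariance Gaussians then yields a conditional value of $\alpha\|\bU^T\bw\|_2^2/(2\sigma^2)$, equivalently $\EE{\bigl(\dif P_{\mathcal{M}(\bX)}/\dif P_{\mathcal{M}(\bX')}\bigr)^\alpha \mid \bU} = \exp\!\bigl(\alpha(\alpha-1)\|\bU^T\bw\|_2^2/(2\sigma^2)\bigr)$.

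The heart of the argument is then averaging over the random projection $\bU$. Because each column of $\bU$ has i.i.d.\ $\mathcal{N}(0,1/d)$ entries and the columns are independent, the coordinates of $\bU^T\bw$ are independent $\mathcal{N}(0,\|\bw\|_2^2/d)$, so $\|\bU^T\bw\|_2^2$ is a scaled $\chi^2_{m'}$ variable. Applying the chi-square moment generating function coordinatewise and multiplying across the $m'$ independent columns gives $\EEE{\bU}{\exp\!\bigl(\alpha(\alpha-1)\|\bU^T\bw\|_2^2/(2\sigma^2)\bigr)} = \bigl(1-\gamma\|\bw\|_2^2/d\bigr)^{-m'/2}$, where $\gamma=(\alpha^2-\alpha)/\sigma^2$; the hypothesis $\gamma<d$ is precisely what keeps the MGF argument below its critical value, so this expectation is finite. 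Taking logarithms, dividing by $\alpha-1$, bounding by the worst case $\|\bw\|_2=1$, and applying $-\ln(1-x)\le x/(1-x)$ with $x=\gamma/d$ collapses the bound to $\renyi\bigl(\mathcal{M}(\bX)\,\|\,\mathcal{M}(\bX')\bigr)\le m'\alpha/\bigl(2\sigma^2(d-\gamma)\bigr)$, the claimed RDP guarantee.

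I expect the main obstacle to be the expectation over $\bU$ in the third step: this is exactly where the privacy amplification from random projection is harvested. It requires recognizing the $\chi^2$ structure, correctly tracking that the MGF argument remains in its domain of finiteness (which is what produces the $\gamma<d$ hypothesis and the $d-\gamma$ denominator), and then relaxing the exact factor $(1-\gamma/d)^{-m'/2}$ to the clean closed form via the logarithmic inequality. The remaining ingredients---the Gaussian Rényi formula, the symmetry in $\bX,\bX'$ that makes the same bound valid for both orderings, and the final RDP-to-DP conversion---are standard.
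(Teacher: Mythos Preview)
Your proof is correct but takes a genuinely different route from the paper. Both arguments establish the same RDP bound $\epsilon_\alpha = m'\alpha/\bigl(2\sigma^2(d-\gamma)\bigr)$ and then invoke the standard RDP-to-$(\epsilon,\delta)$ conversion, but they compute the R\'enyi divergence differently. The paper observes that the \emph{joint} output $(\bU,\bX\bU+\bV)$ is a zero-mean Gaussian in $\mathbb{R}^{(d+n)m'}$ whose covariance depends on $\bX$, applies the closed-form R\'enyi divergence between zero-mean Gaussians with different covariances (the Gil--Alajaji--Linder formula), and reduces the resulting determinant ratio via block-matrix and Kronecker identities to $\tfrac{m'}{2(\alpha-1)}\bigl[\ln(d\sigma^2)-\ln(d\sigma^2+\Delta_{ii})\bigr]$ with $\Delta_{ii}=(\alpha-\alpha^2)\|\bw\|_2^2$. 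Your route---condition on $\bU$, apply the equal-covariance Gaussian R\'enyi formula for the mean shift $\bw^T\bU$, then average over $\bU$ via the $\chi^2_{m'}$ MGF---arrives at the algebraically identical expression $\tfrac{-m'}{2(\alpha-1)}\ln\bigl(1-\gamma\|\bw\|_2^2/d\bigr)$, after which both proofs finish with the same logarithmic inequality. Your argument is more elementary (no determinants or Kronecker products) and makes the amplification-by-random-projection mechanism explicit as the $\chi^2$ MGF step, while the paper's one-shot joint-Gaussian computation trades transparency for a single closed-form evaluation.
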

Our privacy bound relies on $k$ (the dimension of projection spaces) and $m$ (the number of slices) only through their product $m' = mk$, which reveals a trade-off: with a fixed privacy budget, increasing $k$ aids generative models in capturing higher-order information of the original data, while increasing $m$ improves the accuracy of learning lower-order information. This trade-off resembles marginal-based mechanisms, where adjusting $k$ to enable generative models to learn the $k$-way marginal queries \citep[Definition~1 in][]{liu2021iterative} involves a similar effect. However, we remark that even one-dimensional slices suffice to fully characterize a distribution since $\textnormal{SD}_{f,1,\sigma^2}(P_{\rX}\|P_{\rX^{\text{syn}}}) = 0$ implies $P_{\rX} = P_{\rX^{\text{syn}}}$ by Proposition~\ref{prop::SD_zero}. In contrast, matching all $k$-way marginal distributions for $k<d$ does not necessarily ensure $P_{\rX} = P_{\rX^{\text{syn}}}$.

We prove in Appendix Proposition~\ref{prop::DP_det_proj}  that the slicing privacy mechanism is $\left(\frac{m\alpha}{2\sigma^2} +  \frac{\ln(1/\delta)}{\alpha-1},\delta\right)$-DP if the projection matrix $\bU$ is deterministic. Comparing it with Theorem~\ref{thm:diffpriv}, we observe that by randomly selecting the projection matrix, we can achieve a tighter privacy bound by reducing a factor of $\frac{k}{d-\gamma}$ (for the first-term), even if $\bU$ is disclosed by the privacy mechanism. The rationale behind this lies in the fact that a deterministic projection matrix allows the model designer/adversary to target specific individual records through carefully designed projection directions.

\begin{rem}[Choosing $\alpha$]
The bound in Theorem~\ref{thm:diffpriv} can be optimized in terms of $\alpha$ for a desired fixed $\delta$ and this optimization can be done numerically. Alternatively, there is an ad hoc strategy for `approximately optimizing' $\alpha$. Let us assume $\gamma \leq d/2$, i.e. $\alpha^2 - \alpha \leq d\sigma^2/2$.
Then 
\[
\epsilon \leq \frac{m' \alpha}{\sigma^2 d} + \frac{\ln(1/\delta)}{\alpha-1}.
\]
Minimizing the right hand side expression w.r.t. $\alpha > 1$ yields an optimal
\[
\alpha^\ast = 1 + \sqrt{ \frac{  \sigma^2 d\ln (1/\delta)}{m'}}.
\]
Substituting it into the (true) expression for $\epsilon$ yields
\[
\epsilon^\ast = \frac{m'}{2\sigma^2 (d-\gamma^\ast)} + \left(1 + \sqrt{\frac{d}{2(d-\gamma)}}\right)\sqrt{\frac{m'\ln(1/\delta)}{\sigma^2 d}} \leq  \frac{m'}{\sigma^2 d} + 2\sqrt{\frac{m'\ln(1/\delta)}{\sigma^2 d}} ,
\]
where $\gamma^\ast = \sigma^{-2}(\alpha^2 - \alpha)$. 
%
\end{rem}

\subsection{Training Generative Models}

We outline our approach for training privacy-preserving generative models using the smoothed-sliced $f$-divergence (see Algorithm~\ref{alg:dp_Gen} for more details). First, we transform the output of the slicing privacy mechanism $\mathcal{M}(\bX) = (\bU, \bX \bU + \bV)$ into $\mathcal{O} = \{(\btheta_s, \bo_{i,s})\}_{s\in [m], i\in[n]}$. Here $\btheta_s = \bU_{:,(s-1)k+1:sk} \in\Reals^{d\times k}$ is the random slicing directions and $\bo_{i,s} = (\bX\bU + \bV)_{i, (s-1)k+1:sk}\in \Reals^{k}$ is the (noisy) projection of the $i$-th data point onto the $s$-th slice. 

Let $G_{\omega}$ be a generative model with trainable parameters $\omega\in\Omega$. At each iteration, we sample a mini-batch from $\mathcal{O}$ as $\{(\btheta_s, \bo_{i,s})\}_{s\in [m], i\in[b]}$, where $b$ is the batch size. 

To produce synthetic data, we draw random noise $\bz_j$ for $j\in [b]$ and feed them into the generative model
$\bx_j^{\text{syn}} = G_{\omega}(\bz_j)$. Additionally, we draw random noise $\bar{\bv}_{j,s} \sim \mathcal{N}(\bm{0}, \sigma^2 \bI_{k})$ for $j\in [b]$ and $s\in[m]$ that will be added to the projected synthetic data.\footnote{This noise can be resampled every epoch as it does not touch the real data.}
Given these samples, we train the generative model $G_{\omega}$ to generate samples close to the real data, using an estimator for the sliced $f$-divergence to measure the discrepancy between the noisy-sliced real data $\bo_{i,s}$ and the noisy-sliced synthetic data $\bx^{\text{syn}}_{j} \btheta_{s} + \bar{\bv}_{j,s}$. The smoothed slice $f$-divergence \eqref{eq::SDloss} yields the following loss function:
\begin{align}
    \label{eq::loss_generic}
    L(\omega) = \frac{1}{m} \sum_{s=1}^m \estfdiv\left(\{\bx^{\text{syn}}_j \btheta_s + \bar{\bv}_{s,j}\}_{j=1}^{b}\| \{\bo_{i,s}\}_{i=1}^{b}\right).
\end{align}
The following corollary is a direct application of Proposition \ref{prop::SD_zero}.
\begin{cor}[Consistency]
    Consider a dataset $\mathbf{X}\in \mathbb{R}^{n\times d}$ of $n$ i.i.d. samples from a $d$-dimensional distribution $P_X$ whose moment generating function exists, and the slicing privacy mechanism $\mathcal{M}(\mathbf{X})$ as in Definition~\ref{defn::slicing_mech}. Suppose the noise level $\sigma$ is fixed while $m \rightarrow \infty$ and $n \rightarrow \infty$.\footnote{Privacy guarantees will be vacuous in this limit but this can be mitigated by privacy amplification (see Lemma~\ref{lem:amplification} in appendix). Specifically, we can subsample $n$ data from a larger dataset of size $n'$ and let $n'/n \rightarrow \infty$.} Additionally, suppose that the chosen $f$-divergence estimator $\hat{\mathrm{D}}_f$ is consistent and $f$ is strictly convex at 1. Then  $\lim_{n,m,b\rightarrow \infty} L(\omega^\ast) = 0$ for some $\omega^\ast\in\Omega$ if and only if $G_{\omega^\ast}(Z)\sim P_X$.
    \label{cor:consist}
\end{cor}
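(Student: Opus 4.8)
The plan is to show that, under the stated triple limit, the empirical loss $L(\omega^\ast)$ converges to the population smoothed-sliced $f$-divergence $\sfdiv(P_{G_{\omega^\ast}(Z)}\|P_X)$, and then to invoke Proposition~\ref{prop::SD_zero}, which gives that this population quantity vanishes if and only if $P_{G_{\omega^\ast}(Z)}=P_X$. Both directions of the corollary then follow at once: if $G_{\omega^\ast}(Z)\sim P_X$ the population divergence is zero so $\lim L(\omega^\ast)=0$, and conversely a zero limit forces $P_{G_{\omega^\ast}(Z)}=P_X$, i.e.\ $G_{\omega^\ast}(Z)\sim P_X$. Thus the entire content is the identification of the limit together with the application of Proposition~\ref{prop::SD_zero}.

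For the convergence $L(\omega^\ast)\to\sfdiv(P_{G_{\omega^\ast}(Z)}\|P_X)$ I would take the limits iteratively, first $b\to\infty$ (with $n\ge b$, so $n\to\infty$ as well) and then $m\to\infty$. Fix a slice index $s$. For fixed $\btheta_s$, the points $\{\bx^{\text{syn}}_j\btheta_s+\bar\bv_{s,j}\}_j$ are i.i.d.\ draws from $P_{\btheta_s^T X^{\text{syn}}+N}$, while the points $\{\bo_{i,s}\}_i=\{\btheta_s^T\bX_i+\bV_{i,s}\}_i$ are i.i.d.\ draws from $P_{\btheta_s^T X+N}$, since the data rows are i.i.d.\ from $P_X$ and the mechanism noise entries are i.i.d. Hence by the assumed consistency of $\estfdiv$, as $b\to\infty$ each inner term converges to the true $k$-dimensional divergence $\fdiv(P_{\btheta_s^T X^{\text{syn}}+N}\,\|\,P_{\btheta_s^T X+N})$, and a union over the finitely many indices $s\in[m]$ gives $L(\omega^\ast)\to\frac{1}{m}\sum_{s=1}^m\fdiv(P_{\btheta_s^T X^{\text{syn}}+N}\|P_{\btheta_s^T X+N})$.

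This last average is a Monte-Carlo estimate of the sliced divergence over i.i.d.\ slice directions. Each summand is nonnegative and, thanks to the Gaussian smoothing (which renders every projected density smooth and bounded, so that the density ratio and $f$ applied to it are integrable), has finite common expectation equal to $\sfdiv(P_{G_{\omega^\ast}(Z)}\|P_X)$. The strong law of large numbers then yields convergence of the $m$-term average to this expectation as $m\to\infty$, completing the claimed limit. The mechanism draws Gaussian rather than Stiefel-uniform directions, so strictly speaking the limit is the Gaussian-sliced analogue of $\sfdiv$; this is immaterial, because the zero-iff-equal conclusion rests on the divergence vanishing for almost every direction, and Gaussian directions have full support in $\Reals^d$, so the same characteristic-function/Cram\'er--Wold argument underlying Proposition~\ref{prop::SD_zero} applies verbatim.

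The main obstacle is making the limit rigorous rather than merely formal: the estimator consistency is a per-slice large-sample statement whereas the law of large numbers acts across slices, so I must argue that the stated joint limit $\lim_{n,m,b\to\infty}$ reduces to the iterated one, which amounts to checking that the $b\to\infty$ convergence is uniform enough over the $m$ slices and that the $m$-term averages are suitably integrable (dominated by the finite integrable envelope supplied by the Gaussian smoothing). A secondary point is the hypothesis of Proposition~\ref{prop::SD_zero}, which asks that both distributions possess moment generating functions: I would additionally use that the generator family $\{G_\omega\}$ produces distributions with MGFs—e.g.\ because its output is bounded, matching the $\|\bX_{i,:}\|_2\le 1$ normalization of the real data—so that the application of Proposition~\ref{prop::SD_zero} to $P_{G_{\omega^\ast}(Z)}$ and $P_X$ is justified.
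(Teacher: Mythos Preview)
Your proposal is correct and follows essentially the same route as the paper: first pass to the slice-wise population $f$-divergences via consistency of $\estfdiv$ as $n,b\to\infty$, then average over slices to reach $\sfdiv$ as $m\to\infty$, and finally invoke Proposition~\ref{prop::SD_zero} for the equivalence. You are in fact more careful than the paper on several points it glosses over (the Gaussian-vs-Stiefel direction distribution, the MGF hypothesis for the generator, and the uniformity needed to interchange the iterated limits), so there is nothing to correct.
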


Inserting our kernel-based estimator \eqref{eq::fdiv_est} yields for kernel $\mathsf{K}$
\begin{align}
\label{eq::loss_slice}
    L(\omega) = \frac{1}{m b} \sum_{s=1}^m  \ones_{b}^T f\left(\left((\bK_{s} + \tau \bI_{b})^{-1} \bK_{s,\omega} \ones_{b}\right)_{+}\right),
\end{align}
where we compute the kernel Gram matrices $\bK_{s}, \bK_{s,\omega} \in \Reals^{b \times b}$ along each slice:
\begin{align}
\label{eq::kernel_slice}
    [\bK_{s}]_{i,i'} = \mathsf{K}\left(\bo_{i,s}, \bo_{i',s}\right), \quad 
    [\bK_{s,\omega}]_{i,j} = \mathsf{K}\left(\bo_{i,s}, \bx^{\text{syn}}_{j} \btheta_{s} + \bar{\bv}_{j,s}\right), \quad \text{for } s\in[m].
\end{align}
To ensure the stability of computing matrix inversion, we include a $\tau \bI_{b}$ where $\tau > 0$ is a small constant. Given that the domain of $f$ is $[0,\infty)$, we clip the density ratio estimator to make it non-negative.

\begin{rem}
Note that to compute the kernel Gram matrices $\bK_{s}$ and $\bK_{s,\omega}$, we need to apply the Gaussian kernel, which requires a constant $\sigma_g$. A heuristic of choosing this $\sigma_g$ is to make it the median distance between all samples---note that this will lead to different kernels along different slices. In practice, we use an ensemble of $\sigma_g$ and average their density ratio estimators. 
\end{rem}

\begin{algorithm}[httb]
\begingroup
\small
\caption{Training DP generative modes with the smoothed-sliced $f$-divergence.}
\label{alg:dp_Gen}

\begin{algorithmic}[*]

\State {\bfseries Input:} training data $\bX = \{\bx_i\}_{i=1}^n$; slicing dimension $k$; number of slices $m$; batch size; max number of iterations $T$; learning rate $\eta$; privacy budget $\epsilon$, $\delta$.


\State Apply the noisy slicing mechanism $\mathcal{M}(\bX)$ with $(\epsilon,\delta)$-DP, finding a noise variance $\sigma^2_{(\epsilon,\delta)}$ and transforming the output into $\{(\btheta_s, \bo_{i,s})\}_{s\in [m], i\in[n]}$.

\For{$t = 1, \cdots, T$}

\State Sample a mini-batch of data from $\{(\btheta_s, \bo_{i,s})\}_{s\in [m], i\in[n]}$, choosing a batch subset of the $i$.

\State Generate synthetic data $\mathbf{X}_{\text{syn}}$ by feeding random noise into $G_{\omega}$.

\State Slice and noise the synthetic data as $\bo^{(\text{syn})}_{s} = \btheta_s \mathbf{X}_{\text{syn}} + \sigma_{(\epsilon,\delta)} \mathcal{N}(0, \bI)$, redrawing the noise each time (no privacy cost).

\State Compute the kernel Gram matrices \eqref{eq::kernel_slice} under Gaussian kernels in each slice.

\State Compute the loss function $L(\omega)$ in \eqref{eq::loss_slice}.

\State Run stochastic-gradient optimization $\omega \gets \omega - \eta \nabla_{\omega} L(\omega)$.

\EndFor

\State {\bfseries Output:} generative model $G_{\omega}$.

\end{algorithmic}

\endgroup

\end{algorithm}

\section{Numerical Experiments}
\label{sec::experiments}

We validate our approach and compare it with baselines through numerical experiments. Additional experimental results and details of our setup are reported in Appendix~\ref{append::experiment}.

\subsection{Synthetic Tabular Data}

\begin{table*}[t]
\small
\centering
\resizebox{0.9\textwidth}{!}{
\renewcommand{\arraystretch}{1.25}
\begin{tabular}{llccccc}

\toprule

&   & \multicolumn{2}{c}{Single Attribute Similarity} & \multicolumn{2}{c}{Pairwise Attribute Similarity} & \multicolumn{1}{c}{Classifier F1 Score}

\\
\cmidrule(lr){3-4} \cmidrule(lr){5-6} \cmidrule(lr){7-7}

Dataset  & DP Mechanism & \texttt{KSComp} & \texttt{TVComp} & \texttt{ContSim} & \texttt{CorrSim} & \texttt{LogitRegression}\\

\midrule
$\texttt{Income}$    & Algorithm~\ref{alg:dp_Gen}              & 0.40 \scriptsize{$\pm 0.05$}                    & \textbf{0.70} \scriptsize{$\pm 0.01$}              & \textbf{0.35} \scriptsize{$\pm 0.01$}                       & \textbf{0.96} \scriptsize{$\pm 0.04$}                       & \textbf{0.31} \scriptsize{$\pm 0.20$}                                
\\
\cmidrule(lr){2-7}
                    & \texttt{SliceWass}          & 0.24  \scriptsize{$\pm 0.05$}                 & 0.61  \scriptsize{$\pm 0.03$}             & 0.27  \scriptsize{$\pm 0.02$}                     &0.93  \scriptsize{$\pm 0.07$}                & 0.13  \scriptsize{$\pm 0.14$}                   
\\
\cmidrule(lr){2-7}
                     & \texttt{DP-SGD}             & 0.29  \scriptsize{$\pm 0.06$}                 & 0.38  \scriptsize{$\pm 0.02$}             & 0.10  \scriptsize{$\pm 0.01$}                     &0.75  \scriptsize{$\pm 0.07$}                & 0.00  \scriptsize{$\pm 0.00$}                                         
\\
\cmidrule(lr){2-7}
                    & \texttt{PATE}             & 0.15  \scriptsize{$\pm 0.03$}                 & 0.40  \scriptsize{$\pm 0.05$}             & 0.13  \scriptsize{$\pm 0.03$}                     &0.90  \scriptsize{$\pm 0.07$}                & 0.16  \scriptsize{$\pm 0.18$}                                            
\\
\cmidrule(lr){2-7}
                    & \texttt{MERF}             & \textbf{0.81}  \scriptsize{$\pm 0.01$}                 & 0.51  \scriptsize{$\pm 0.04$}             & 0.10  \scriptsize{$\pm 0.02$}                     & 0.56 \scriptsize{$\pm 0.03$}          & 0.28  \scriptsize{$\pm 0.001$}                                    
                    
\\
\midrule
$\texttt{Coverage}$ & Algorithm~\ref{alg:dp_Gen}              & \textbf{0.74} \scriptsize{$\pm 0.07$}                    & \textbf{0.87} \scriptsize{$\pm 0.02$}              & \textbf{0.63} \scriptsize{$\pm 0.02$}                       & \textbf{0.91} \scriptsize{$\pm 0.05$}                       & {0.41} \scriptsize{$\pm 0.04$}                     
\\
\cmidrule(lr){2-7}
                    & \texttt{SliceWass}          & 0.72  \scriptsize{$\pm 0.06$}                 & 0.85  \scriptsize{$\pm 0.01$}             & 0.60  \scriptsize{$\pm 0.01$}                     & \textbf{0.91}  \scriptsize{$\pm 0.05$}                & \textbf{0.41}  \scriptsize{$\pm 0.02$}                      
\\
\cmidrule(lr){2-7}
                     & \texttt{DP-SGD}             & 0.44  \scriptsize{$\pm 0.06$}                 & 0.63  \scriptsize{$\pm 0.09$}             & 0.35  \scriptsize{$\pm 0.10$}                     &0.73  \scriptsize{$\pm 0.15$}                & 0.24  \scriptsize{$\pm 0.28$}                                      
\\
\cmidrule(lr){2-7}
                    & \texttt{PATE}             & 0.35  \scriptsize{$\pm 0.05$}                 & 0.51  \scriptsize{$\pm 0.03$}             & 0.26  \scriptsize{$\pm 0.02$}                     &0.85  \scriptsize{$\pm 0.09$}                & 0.32  \scriptsize{$\pm 0.22$}                    
                     
\\
\cmidrule(lr){2-7}
                    & \texttt{MERF}             & 0.36  \scriptsize{$\pm0.18 $}                 & 0.52  \scriptsize{$\pm 0.04$}             & 0.18  \scriptsize{$\pm 0.03$}                     & 0.69 \scriptsize{$\pm 0.15$}          & 0.29  \scriptsize{$\pm 0.19$}                                    
                    
\\
\midrule
$\texttt{Mobility}$  & Algorithm~\ref{alg:dp_Gen}              & {0.68} \scriptsize{$\pm 0.06$}                    & \textbf{0.85} \scriptsize{$\pm 0.01$}              & \textbf{0.50} \scriptsize{$\pm 0.01$}                       & \textbf{0.86} \scriptsize{$\pm 0.01$}                       & \textbf{0.69} \scriptsize{$\pm 0.03$}                     
\\
\cmidrule(lr){2-7}
                    & \texttt{SliceWass}          & \textbf{0.74}  \scriptsize{$\pm 0.04$}                 & 0.84  \scriptsize{$\pm 0.02$}             & 0.50  \scriptsize{$\pm 0.02$}                     &0.86  \scriptsize{$\pm 0.03$}                & 0.67  \scriptsize{$\pm 0.06$}                                        
\\
\cmidrule(lr){2-7}
                     & \texttt{DP-SGD}             & 0.52  \scriptsize{$\pm 0.05$}                 & 0.70  \scriptsize{$\pm 0.06$}             & 0.34  \scriptsize{$\pm 0.06$}                     &0.74  \scriptsize{$\pm 0.13$}                & 0.67  \scriptsize{$\pm 0.19$}                                   
\\
\cmidrule(lr){2-7}
                    & \texttt{PATE}             & 0.08  \scriptsize{$\pm 0.03$}                 & 0.54  \scriptsize{$\pm 0.03$}             & 0.23  \scriptsize{$\pm 0.02$}                     &0.83  \scriptsize{$\pm 0.02$}                & 0.41  \scriptsize{$\pm 0.42$}                   

\\
\cmidrule(lr){2-7}
                    & \texttt{MERF}             & 0.34  \scriptsize{$\pm 0.05$}                 & 0.59  \scriptsize{$\pm 0.01$}             & 0.21  \scriptsize{$\pm 0.01$}                     & 0.68 \scriptsize{$\pm 0.04$}          & 0.59  \scriptsize{$\pm 0.34$}                                    
                    
\\
\midrule
$\texttt{Employment}$    & Algorithm~\ref{alg:dp_Gen}              & 0.77 \scriptsize{$\pm 0.03$}                    & \textbf{0.83} \scriptsize{$\pm 0.01$}              & \textbf{0.64} \scriptsize{$\pm 0.02$}                       & -                       & 0.47 \scriptsize{$\pm 0.07$}                                    \\
\cmidrule(lr){2-7}
                    & \texttt{SliceWass}          & 0.74  \scriptsize{$\pm 0.04$}                 & \textbf{0.83}  \scriptsize{$\pm 0.01$}             & 0.62  \scriptsize{$\pm 0.00$}                     & -          & 0.50  \scriptsize{$\pm 0.10$}                        

\\
\cmidrule(lr){2-7}
                     & \texttt{DP-SGD}             & 0.45  \scriptsize{$\pm 0.04$}                 & 0.52  \scriptsize{$\pm 0.05$}             & 0.25  \scriptsize{$\pm 0.05$}                     & -         & 0.46  \scriptsize{$\pm 0.31$}                                            
\\
\cmidrule(lr){2-7}
                    & \texttt{PATE}             & 0.39  \scriptsize{$\pm 0.12$}                 & 0.54  \scriptsize{$\pm 0.04$}             & 0.30  \scriptsize{$\pm 0.04$}                     & -          & 0.38  \scriptsize{$\pm 0.28$}                                    
                    
\\
\cmidrule(lr){2-7}
                    & \texttt{MERF}             & \textbf{0.96}  \scriptsize{$\pm 0.01$}                 & 0.67  \scriptsize{$\pm 0.03$}             & 0.34  \scriptsize{$\pm 0.02$}                     & -          & \textbf{0.67}  \scriptsize{$\pm 0.03$}                                    
                    
\\
\midrule
$\texttt{TravelTime}$    & Algorithm~\ref{alg:dp_Gen}              & 0.45 \scriptsize{$\pm 0.03$}                    & \textbf{0.75} \scriptsize{$\pm 0.01$}              & \textbf{0.45} \scriptsize{$\pm 0.01$}                       & {0.87} \scriptsize{$\pm 0.05$}                       & \textbf{0.42} \scriptsize{$\pm 0.12$}                                     \\
\cmidrule(lr){2-7}
                    & \texttt{SliceWass}          & 0.33  \scriptsize{$\pm 0.01$}                 & 0.62  \scriptsize{$\pm 0.02$}             & 0.33  \scriptsize{$\pm 0.01$}                     &0.84  \scriptsize{$\pm 0.05$}                & {0.42}  \scriptsize{$\pm 0.14$}                          

\\
\cmidrule(lr){2-7}
                     & \texttt{DP-SGD}             & 0.37  \scriptsize{$\pm 0.08$}                 & 0.54  \scriptsize{$\pm 0.03$}             & 0.22  \scriptsize{$\pm 0.03$}                     &\textbf{0.94}  \scriptsize{$\pm 0.07$}                & 0.37  \scriptsize{$\pm 0.33$}                                             
\\
\cmidrule(lr){2-7}
                    & \texttt{PATE}             & 0.37  \scriptsize{$\pm 0.04$}                 & 0.40  \scriptsize{$\pm 0.06$}             & 0.15  \scriptsize{$\pm 0.03$}                     &0.91  \scriptsize{$\pm 0.02$}                & 0.34  \scriptsize{$\pm 0.20$}                                       
                      
\\ 
\cmidrule(lr){2-7}
                    & \texttt{MERF}             & \textbf{0.61}  \scriptsize{$\pm 0.04$}                 & 0.37  \scriptsize{$\pm 0.02$}             & 0.07  \scriptsize{$\pm 0.01$}                     & 0.62 \scriptsize{$\pm 0.02$}          & 0.14  \scriptsize{$\pm 0.28$}                                    
                    
\\
\bottomrule 
\end{tabular}
}
\caption{We compare synthetic tabular data generated by our Algorithm~\ref{alg:dp_Gen} with baselines, all under the same privacy budget $\epsilon = 5.1$. 
We demonstrate them on the US Census data derived from the American Community Survey (ACS) with various evaluation metrics. These metrics range from 0 to 1, with higher scores indicating better performance. Since \texttt{Employment} has only one numerical column and \texttt{CorrSim} requires at least two numerical columns, we skip its values. The highest scores are highlighted in bold. If two methods have the same average score, only the one with lower standard deviation is highlighted. We remark that \texttt{KSComp} and \texttt{CorrSim} are less significant since they are designed for numerical columns. However, the benchmark datasets contain a limited number of numerical columns, with the majority being categorical (see Table~\ref{tab:data}).
}
\label{tabel:benchmark1}
\end{table*}

\paragraph{Baselines.} We compare our Algorithm~\ref{alg:dp_Gen} with four DP mechanisms: \texttt{DP-SGD} \citep{xie2018differentially,rosenblatt2020differentially}, \texttt{PATE} \citep{jordon2019pate}, \texttt{MERF} \citep{harder2021dp} and \texttt{SliceWass} \citep{rakotomamonjy2021differentially}. Like our algorithm, these baselines can handle both numerical and categorical data types without requiring data discretization and return a generative model that can produce any number of synthetic data. The implementations of the first two are adapted from an open-source Python library \citep{opendp2023}, and the \texttt{MERF} implementation is from \url{https://github.com/ParkLabML/DP-MERF}. For \texttt{SliceWass}, we combine our slicing privacy mechanism with their main algorithm, which fixes the flaw in their privacy analysis and improves their privacy guarantees (and therefore quality results).
Additionally, since their public \href{https://github.com/arakotom/dp_swd}{Github repo} does not include their synthetic data code, we implement their algorithm ourselves.

\paragraph{Data.} We validate both our method and baselines using the US Census data derived from the American Community Survey (ACS) Public Use Microdata Sample (PUMS). Using the API of the Folktables package \citep{ding2021retiring}, we access the 2018 California data. Additionally, Folktables package provides five prediction tasks (\texttt{Income} , \texttt{Coverage}, \texttt{Mobility}, \texttt{Employment}, \texttt{TravelTime}) based on a target column and a set of mixed-type features. 
Details about these data, including the number of records and columns, are provided in Table~\ref{tab:data} in the appendix.

\paragraph{Evaluation metrics.} We follow the evaluation principles in \citep{tao2021benchmarking} for assessing the quality of synthetic data and leverage the APIs from an open-source library in our implementation \citep{sdmetrics}. 
\begin{itemize}[leftmargin=1em]
\item \texttt{KSComplement} (numerical columns) and \texttt{TVComplement} (categorical columns). They measure the (average) similarity of one-way marginals (i.e., histograms of individual columns) between real and synthetic  data.  
\item \texttt{ContingencySimilarity}. It measures the (average) similarity of pairs of categorical columns between real and synthetic data. 
\item \texttt{CorrelationSimilarity}. It measures the (average) correlations among numerical column pairs and computes the similarity between real and synthetic data.
\item \texttt{BinaryLogisticRegression}. It measures the downstream classifier's F-1 score when trained on synthetic and test on real data. 
\end{itemize}
Note that the benchmark datasets contain a limited number of numerical columns, with the majority being categorical (see Table~\ref{tab:data}). Hence, the applicability of \texttt{CorrelationSimilarity} and \texttt{KSComplement} is constrained as they are tailored for numerical columns.

\paragraph{Main results and observations.} 
We present the experimental results for $\epsilon =5.1$ in Table~\ref{tabel:benchmark1} and show results for $\epsilon = 8.1$ in Appendix~\ref{append::experiment}. 
As shown, the two methods using the slicing privacy mechanism (Algorithm \ref{alg:dp_Gen} and \texttt{SliceWass}) consistently outperform the other baselines. \texttt{MERF} has better numbers for three instances of \texttt{KSComp} and one of \texttt{LogitRegression}, but is signficantly worse in other instances and for the remaining metrics.
For both Algorithm~\ref{alg:dp_Gen} and \texttt{SliceWass}, we used the same hyper-parameter initialization and neural network architecture across all datasets. We believe that with more extensive hyperparameter tuning, which incurs no extra privacy cost as previously discussed, their performance could be even further improved.

Algorithm~\ref{alg:dp_Gen} exhibits a more favorable performance compared with \texttt{SliceWass} in most settings. The rationale behind this observation lies in the fact that \texttt{SliceWass} is limited to 1-dimensional slices ($k$ = 1), relying on the closed-form expression of Wasserstein distances in 1D. In contrast, our Algorithm~\ref{alg:dp_Gen} is applicable to higher-dimensional slices. This capability enables the generative models to capture higher-order statistical information, leading to better performance in the pairwise \texttt{ContingencySimilarity} statistic and higher robustness in the \texttt{BinaryLogisticRegression} downstream task. 

\begin{wrapfigure}{r}{0.4\textwidth}
\vspace{-0.4in}
    \begin{center}
    \includegraphics[width=0.4\textwidth]{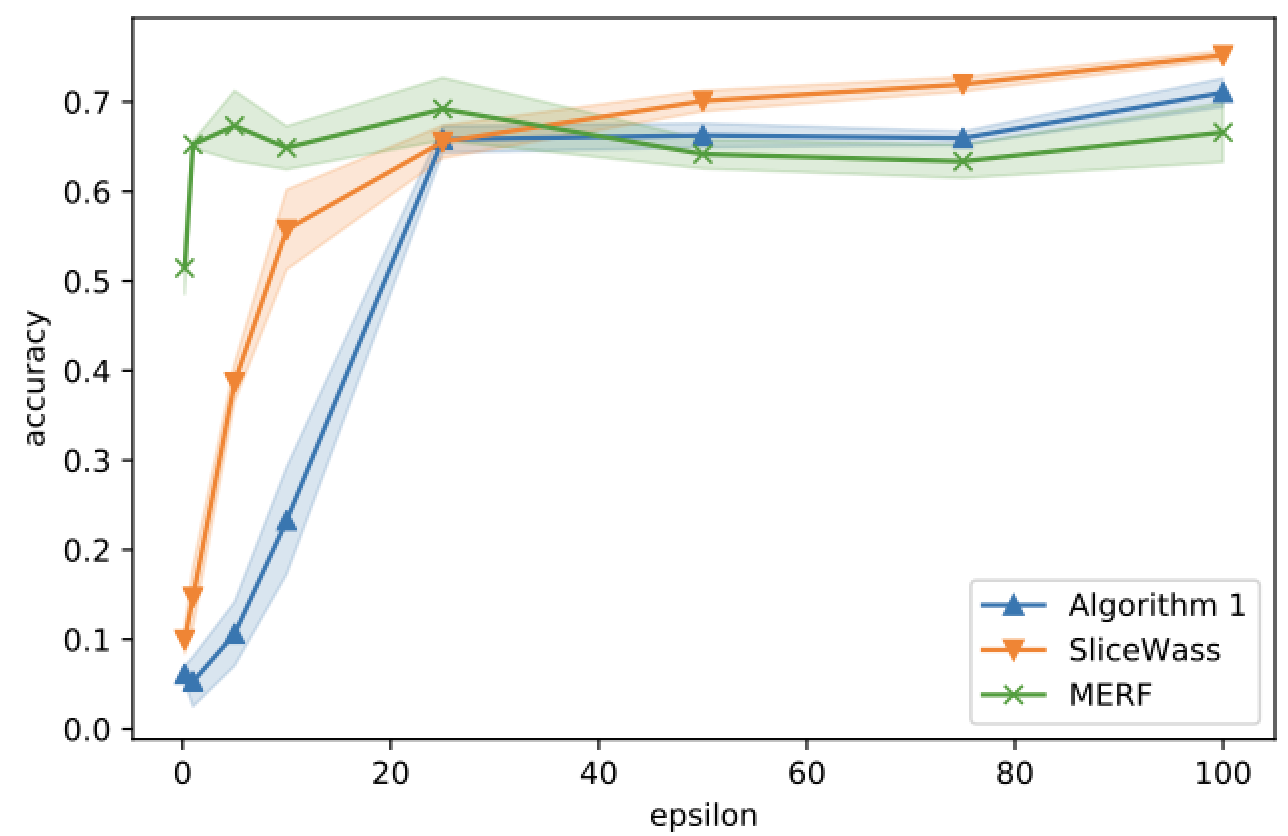}
    \vspace{-.2in}
    \caption{We compare accuracy for downstream classification as a function of privacy budget ($\epsilon$) for synthetic MNIST data created by \texttt{MERF} with our Algorithm \ref{alg:dp_Gen} and \texttt{SliceWass}. Note that the two slicing-mechanism-based approaches outperform \texttt{MERF} for higher privacy budgets. }
\label{fig:MNIST}
\end{center}
\end{wrapfigure}

\subsection{Synthetic Image Data}

We conduct an experiment to generate DP synthetic image data using the MNIST dataset \cite{lecun1998gradient}. We train separate generative models for each of the 10 classes in MNIST, with 10\% of the data randomly sampled for each class. We evaluate the quality of the synthetic images by measuring the downstream accuracy of a classifier trained on the synthetic data and deployed on the real MNIST test set. We compare Algorithm~\ref{alg:dp_Gen} and \texttt{SliceWass} (combined with our slicing privacy mechanism), with \texttt{MERF}, which is a state-of-the-art MMD-based method for generating synthetic images \cite{harder2021dp}. We observe that \texttt{MERF} outperforms the two slicing-mechanism-based algorithms at lower privacy budgets but underperforms at higher budgets. We hypothesize that \texttt{MERF} achieves superior performance in the low-budget regime since it privatizes only the mean embedding of the data for each class; its generative model is designed to recover this mean rather than the full data distribution. As using only the mean involves very little privacy budget, they are able to add less noise and outperform in the low-budget regime. Our approach, which models the full data distribution, does not benefit from these savings. However, as the privacy budget increases, our method better captures the true data distribution, leading to improved results over \texttt{MERF}.

\begin{wrapfigure}{r}{0.4\textwidth}
\vspace{-0.5in}
    \begin{center}
    \includegraphics[width=0.4\textwidth]{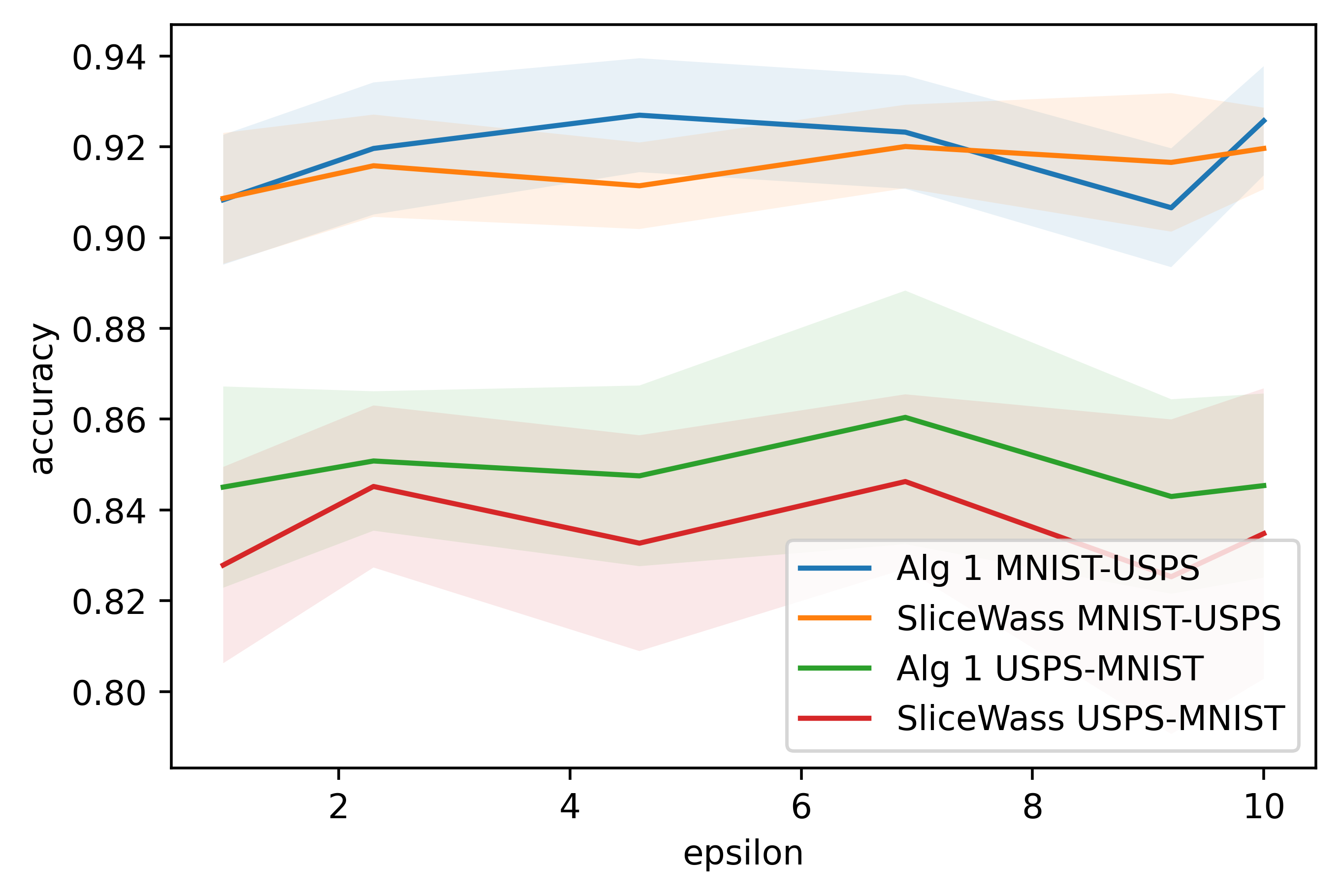}
    \vspace{-.2in}
    \caption{Unsupervised domain adaptation between from MNIST to USPS and vice versa.}
\label{fig:DA}
\end{center}
\vspace{-0.5in}
\end{wrapfigure}
\subsection{Domain Adaptation}

Following the experiments in \cite{rakotomamonjy2021differentially}, for completeness we also apply our smoothed-sliced $f$-divergence in domain adaptation tasks. We benchmark our approach with \texttt{SliceWass} \cite{rakotomamonjy2021differentially} using the implementation available on their \href{https://github.com/arakotom/dp_swd}{Github repo}, modified to use our DP bound. This experiment aims to privately train a classifier using labeled data from a source domain and unlabelled data from a target domain. We consider the target and source domains being MNIST and USPS, as well as the reverse. Results for varying $\epsilon$ are depicted in Figure \ref{fig:DA}. As shown, our Algorithm~\ref{alg:dp_Gen} using 100 slices improves on \texttt{SliceWass}.

\section{Final Remarks and Limitations}
\label{sec::final}
In this paper, we consider the slicing mechanism for training privacy-preserving generative models and derive strong privacy guarantees for it. This mechanism motivates us to combine it with $f$-divergence to yield a smoothed-sliced $f$-divergence that can be estimated with a kernel-based density ratio estimator. 
Our approach circumvents the need for injecting noise into gradient updates and avoids adversarial training for optimizing generative models. It provides flexibility in selecting neural architectures and tuning hyper-parameters without incurring additional privacy costs. Through experiments on synthetic data generation, we validate our approach and compare it with existing baselines. Our findings suggest that the slicing privacy mechanism is a powerful tool for training generative models to create private synthetic data, and the smoothed-sliced $f$-divergence provides a promising avenue for advancing the field of privacy-preserving data synthesis in sensitive, high-dimensional datasets. Additionally, we hope our effort can be of particular interest to the information theory community and open up a new application frontier for classical information-theoretic tools.

There are several promising avenues worth exploring. The $f$-divergence has many nice properties, including (strong) data processing inequalities, inequalities among different $f$-divergences, and variational representation. It would be interesting to investigate whether similar properties extend to the smoothed-sliced $f$-divergence $\sfdiv$. Additionally, deriving sample-complexity bounds for estimating $\sfdiv$ from finite samples would offer valuable insights. 
Finally, the use of synthetic data presents both opportunities and challenges. On the one hand, synthetic data is well-suited for various tasks like early model development, educational demonstrations, simulation, and testing. On the other, synthetic data can never fully replicate all aspects of the original data, and adding DP guarantees introduces an additional layer of complexity requiring a careful balance between privacy and utility. Therefore, any machine learning models trained on synthetic data or any insights drawn from synthetic data should undergo thorough evaluation before deployment in real-world scenarios. If biases are detected, it is crucial to diagnose their source---whether stemming from the synthetic data generation algorithm, the noise added to the real data, or other factors---to ensure that the models perform robustly and reliably in practice.

\section*{Acknowledgement}

We would like to thank Akash Srivastava for his insights on connecting sliced divergences to synthetic data generation, and for sharing \citep{rakotomamonjy2021differentially} with us.

\clearpage

\bibliography{references}

\newcommand{\etalchar}[1]{$^{#1}$}
\begin{thebibliography}{BJWW{\etalchar{+}}19}

\bibitem[ABK{\etalchar{+}}21]{aydore2021differentially}
Sergul Aydore, William Brown, Michael Kearns, Krishnaram Kenthapadi, Luca
  Melis, Aaron Roth, and Ankit~A Siva.
\newblock Differentially private query release through adaptive projection.
\newblock In {\em International Conference on Machine Learning}, pages
  457--467. PMLR, 2021.

\bibitem[BBDS12]{blocki2012johnson}
Jeremiah Blocki, Avrim Blum, Anupam Datta, and Or~Sheffet.
\newblock The {Johnson-Lindenstrauss} transform itself preserves differential
  privacy.
\newblock In {\em 2012 IEEE 53rd Annual Symposium on Foundations of Computer
  Science}, pages 410--419. IEEE, 2012.

\bibitem[BBG18]{balle2018privacy}
Borja Balle, Gilles Barthe, and Marco Gaboardi.
\newblock Privacy amplification by subsampling: Tight analyses via couplings
  and divergences.
\newblock {\em Advances in neural information processing systems}, 31, 2018.

\bibitem[BJWW{\etalchar{+}}19]{beaulieu2019privacy}
Brett~K Beaulieu-Jones, Zhiwei~Steven Wu, Chris Williams, Ran Lee, Sanjeev~P
  Bhavnani, James~Brian Byrd, and Casey~S Greene.
\newblock Privacy-preserving generative deep neural networks support clinical
  data sharing.
\newblock {\em Circulation: Cardiovascular Quality and Outcomes},
  12(7):e005122, 2019.

\bibitem[BKZ23]{bie2023private}
Alex Bie, Gautam Kamath, and Guojun Zhang.
\newblock Private {GAN}s, revisited.
\newblock {\em Transactions on Machine Learning Research}, 2023.

\bibitem[BS16]{bun2016concentrated}
Mark Bun and Thomas Steinke.
\newblock Concentrated differential privacy: Simplifications, extensions, and
  lower bounds.
\newblock In {\em Theory of Cryptography: 14th International Conference, TCC
  2016-B, Beijing, China, October 31-November 3, 2016, Proceedings, Part I},
  pages 635--658. Springer, 2016.

\bibitem[BSV22a]{boedihardjo2022privacy}
March Boedihardjo, Thomas Strohmer, and Roman Vershynin.
\newblock Privacy of synthetic data: A statistical framework.
\newblock {\em IEEE Transactions on Information Theory}, 69(1):520--527, 2022.

\bibitem[BSV22b]{boedihardjo2022private}
March Boedihardjo, Thomas Strohmer, and Roman Vershynin.
\newblock Private sampling: a noiseless approach for generating differentially
  private synthetic data.
\newblock {\em SIAM Journal on Mathematics of Data Science}, 4(3):1082--1115,
  2022.

\bibitem[CBV{\etalchar{+}}21]{cao2021don}
Tianshi Cao, Alex Bie, Arash Vahdat, Sanja Fidler, and Karsten Kreis.
\newblock Don’t generate me: Training differentially private generative
  models with sinkhorn divergence.
\newblock {\em Advances in Neural Information Processing Systems},
  34:12480--12492, 2021.

\bibitem[CLWX21]{cai2021data}
Kuntai Cai, Xiaoyu Lei, Jianxin Wei, and Xiaokui Xiao.
\newblock Data synthesis via differentially private markov random fields.
\newblock {\em Proceedings of the VLDB Endowment}, 14(11):2190--2202, 2021.

\bibitem[COF20]{chen2020gs}
Dingfan Chen, Tribhuvanesh Orekondy, and Mario Fritz.
\newblock Gs-wgan: A gradient-sanitized approach for learning differentially
  private generators.
\newblock {\em Advances in Neural Information Processing Systems},
  33:12673--12684, 2020.

\bibitem[CYZF20]{chen2020gan}
Dingfan Chen, Ning Yu, Yang Zhang, and Mario Fritz.
\newblock Gan-leaks: A taxonomy of membership inference attacks against
  generative models.
\newblock In {\em Proceedings of the 2020 ACM SIGSAC conference on computer and
  communications security}, pages 343--362, 2020.

\bibitem[DAHY24]{donhauser2024privacy}
Konstantin Donhauser, Javier Abad, Neha Hulkund, and Fanny Yang.
\newblock Privacy-preserving data release leveraging optimal transport and
  particle gradient descent.
\newblock {\em arXiv preprint arXiv:2401.17823}, 2024.

\bibitem[Dat23]{sdmetrics}
DataCebo.
\newblock Synthetic data metrics.
\newblock \url{https://docs.sdv.dev/sdmetrics/}, 2023.

\bibitem[DHMS21]{ding2021retiring}
Frances Ding, Moritz Hardt, John Miller, and Ludwig Schmidt.
\newblock Retiring adult: New datasets for fair machine learning.
\newblock {\em Advances in neural information processing systems},
  34:6478--6490, 2021.

\bibitem[DR14]{dwork2014algorithmic}
Cynthia Dwork and Aaron Roth.
\newblock The algorithmic foundations of differential privacy.
\newblock {\em Foundations and Trends{\textregistered} in Theoretical Computer
  Science}, 9(3--4):211--407, 2014.

\bibitem[DR16]{dwork2016concentrated}
Cynthia Dwork and Guy~N Rothblum.
\newblock Concentrated differential privacy.
\newblock {\em arXiv preprint arXiv:1603.01887}, 2016.

\bibitem[Dud69]{dudley1969speed}
R.~M. Dudley.
\newblock The speed of mean {Glivenko-Cantelli} convergence.
\newblock {\em Ann. Math. Stats.}, 40(1):40--50, Feb. 1969.

\bibitem[DZS18]{deshpande2018generative}
Ishan Deshpande, Ziyu Zhang, and Alexander~G Schwing.
\newblock Generative modeling using the sliced {Wasserstein} distance.
\newblock In {\em Proceedings of the IEEE conference on computer vision and
  pattern recognition}, pages 3483--3491, 2018.

\bibitem[EKKL20]{eliavs2020differentially}
Marek Eli{\'a}{\v{s}}, Michael Kapralov, Janardhan Kulkarni, and Yin~Tat Lee.
\newblock Differentially private release of synthetic graphs.
\newblock In {\em Proceedings of the Fourteenth Annual ACM-SIAM Symposium on
  Discrete Algorithms}, pages 560--578. SIAM, 2020.

\bibitem[fM22]{IOM2022global}
International~Organization for Migration.
\newblock The global victim-perpetrator synthetic data dashboard.
\newblock
  \url{https://www.ctdatacollaborative.org/global-victim-perpetrator-synthetic-dataset},
  2022.

\bibitem[GAH{\etalchar{+}}14]{gaboardi2014dual}
Marco Gaboardi, Emilio Jes{\'u}s~Gallego Arias, Justin Hsu, Aaron Roth, and
  Zhiwei~Steven Wu.
\newblock Dual query: Practical private query release for high dimensional
  data.
\newblock In {\em International Conference on Machine Learning}, pages
  1170--1178. PMLR, 2014.

\bibitem[GAL13]{gil2013renyi}
Manuel Gil, Fady Alajaji, and Tamas Linder.
\newblock R{\'e}nyi divergence measures for commonly used univariate continuous
  distributions.
\newblock {\em Information Sciences}, 249:124--131, 2013.

\bibitem[GG20]{goldfeld2020gaussian}
Ziv Goldfeld and Kristjan Greenewald.
\newblock Gaussian-smoothed optimal transport: Metric structure and statistical
  efficiency.
\newblock In {\em International Conference on Artificial Intelligence and
  Statistics}, pages 3327--3337. PMLR, 2020.

\bibitem[GG21]{goldfeld2021sliced}
Ziv Goldfeld and Kristjan Greenewald.
\newblock Sliced mutual information: A scalable measure of statistical
  dependence.
\newblock {\em Advances in Neural Information Processing Systems},
  34:17567--17578, 2021.

\bibitem[GGK20]{goldfeld2020asymptotic}
Ziv Goldfeld, Kristjan Greenewald, and Kengo Kato.
\newblock Asymptotic guarantees for generative modeling based on the smooth
  {Wasserstein} distance.
\newblock {\em Advances in neural information processing systems},
  33:2527--2539, 2020.

\bibitem[GGNWP20]{goldfeld2020convergence}
Z.~Goldfeld, K.~Greenewald, J.~Niles-Weed, and Y.~Polyanskiy.
\newblock Convergence of smoothed empirical measures with applications to
  entropy estimation.
\newblock {\em IEEE Trans. Inf. Theory}, 66(7):4368--4391, Jul. 2020.

\bibitem[GMHI20]{ge2020kamino}
Chang Ge, Shubhankar Mohapatra, Xi~He, and Ihab~F Ilyas.
\newblock Kamino: Constraint-aware differentially private data synthesis.
\newblock {\em arXiv preprint arXiv:2012.15713}, 2020.

\bibitem[HAP21]{harder2021dp}
Frederik Harder, Kamil Adamczewski, and Mijung Park.
\newblock Dp-merf: Differentially private mean embeddings with randomfeatures
  for practical privacy-preserving data generation.
\newblock In {\em International conference on artificial intelligence and
  statistics}, pages 1819--1827. PMLR, 2021.

\bibitem[HC24]{hod2024differentially}
Shlomi Hod and Ran Canetti.
\newblock Differentially private release of israel's national registry of live
  births.
\newblock {\em arXiv preprint arXiv:2405.00267}, 2024.

\bibitem[HJSP23]{harder2023pre}
Frederik Harder, Milad Jalali, Danica~J Sutherland, and Mijung Park.
\newblock Pre-trained perceptual features improve differentially private image
  generation.
\newblock {\em Transactions on Machine Learning Research}, 2023.

\bibitem[HMDDC17]{hayes2017logan}
Jamie Hayes, Luca Melis, George Danezis, and Emiliano De~Cristofaro.
\newblock Logan: Membership inference attacks against generative models.
\newblock {\em arXiv preprint arXiv:1705.07663}, 2017.

\bibitem[JYVDS19]{jordon2019pate}
James Jordon, Jinsung Yoon, and Mihaela Van Der~Schaar.
\newblock {PATE-GAN}: Generating synthetic data with differential privacy
  guarantees.
\newblock In {\em International conference on learning representations}, 2019.

\bibitem[KLN{\etalchar{+}}11]{kasiviswanathan2011can}
Shiva~Prasad Kasiviswanathan, Homin~K Lee, Kobbi Nissim, Sofya Raskhodnikova,
  and Adam Smith.
\newblock What can we learn privately?
\newblock {\em SIAM Journal on Computing}, 40(3):793--826, 2011.

\bibitem[LBBH98]{lecun1998gradient}
Yann LeCun, L{\'e}on Bottou, Yoshua Bengio, and Patrick Haffner.
\newblock Gradient-based learning applied to document recognition.
\newblock {\em Proceedings of the IEEE}, 86(11):2278--2324, 1998.

\bibitem[LJW{\etalchar{+}}20]{lin2020using}
Zinan Lin, Alankar Jain, Chen Wang, Giulia Fanti, and Vyas Sekar.
\newblock Using gans for sharing networked time series data: Challenges,
  initial promise, and open questions.
\newblock In {\em Proceedings of the ACM Internet Measurement Conference},
  pages 464--483, 2020.

\bibitem[LQS12]{li2012sampling}
Ninghui Li, Wahbeh Qardaji, and Dong Su.
\newblock On sampling, anonymization, and differential privacy or,
  k-anonymization meets differential privacy.
\newblock In {\em Proceedings of the 7th ACM Symposium on Information, Computer
  and Communications Security}, pages 32--33, 2012.

\bibitem[LT19]{liu2019private}
Jingcheng Liu and Kunal Talwar.
\newblock Private selection from private candidates.
\newblock In {\em Proceedings of the 51st Annual ACM SIGACT Symposium on Theory
  of Computing}, pages 298--309, 2019.

\bibitem[LTVW23]{liu2023generating}
Terrance Liu, Jingwu Tang, Giuseppe Vietri, and Steven Wu.
\newblock Generating private synthetic data with genetic algorithms.
\newblock In {\em International Conference on Machine Learning}, pages
  22009--22027. PMLR, 2023.

\bibitem[LVW21]{liu2021iterative}
Terrance Liu, Giuseppe Vietri, and Steven~Z Wu.
\newblock Iterative methods for private synthetic data: Unifying framework and
  new methods.
\newblock {\em Advances in Neural Information Processing Systems}, 34:690--702,
  2021.

\bibitem[LZC{\etalchar{+}}21]{lin2021projection}
T.~Lin, Z.~Zheng, E.~Chen, M.~Cuturi, and M.~Jordan.
\newblock On projection robust optimal transport: Sample complexity and model
  misspecification.
\newblock In {\em International Conference on Artificial Intelligence and
  Statistics (AISTATS-2019)}, pages 262--270, Online, 2021.

\bibitem[Mir17]{mironov2017renyi}
Ilya Mironov.
\newblock R{\'e}nyi differential privacy.
\newblock In {\em 2017 IEEE 30th computer security foundations symposium
  (CSF)}, pages 263--275. IEEE, 2017.

\bibitem[MMS21]{mckenna2021winning}
Ryan McKenna, Gerome Miklau, and Daniel Sheldon.
\newblock Winning the {NIST} contest: A scalable and general approach to
  differentially private synthetic data.
\newblock {\em arXiv preprint arXiv:2108.04978}, 2021.

\bibitem[MMSM22]{mckenna2022aim}
Ryan McKenna, Brett Mullins, Daniel Sheldon, and Gerome Miklau.
\newblock Aim: An adaptive and iterative mechanism for differentially private
  synthetic data.
\newblock {\em arXiv preprint arXiv:2201.12677}, 2022.

\bibitem[MSGHI18]{moon2018ensemble}
Kevin~R Moon, Kumar Sricharan, Kristjan Greenewald, and Alfred~O Hero~III.
\newblock Ensemble estimation of information divergence.
\newblock {\em Entropy}, 20(8):560, 2018.

\bibitem[MSM19]{mckenna2019graphical}
Ryan McKenna, Daniel Sheldon, and Gerome Miklau.
\newblock Graphical-model based estimation and inference for differential
  privacy.
\newblock In {\em International Conference on Machine Learning}, pages
  4435--4444. PMLR, 2019.

\bibitem[MZKH23]{mohapatra2023differentially}
Shubhankar Mohapatra, Jianqiao Zong, Florian Kerschbaum, and Xi~He.
\newblock Differentially private data generation with missing data.
\newblock {\em arXiv preprint arXiv:2310.11548}, 2023.

\bibitem[NCT16]{nowozin2016f}
S.~Nowozin, B.~Cseke, and R.~Tomioka.
\newblock {$f$-GAN: T}raining generative neural samplers using variational
  divergence minimization.
\newblock In {\em Proceedings of the Annual Conference on Advances in Neural
  Information Processing Systems (NeurIPS-2016)}, pages 271--279, Barcelona,
  Spain, Dec. 2016.

\bibitem[NDC{\etalchar{+}}20]{nadjahi2020statistical}
K.~Nadjahi, A.~Durmus, L.~Chizat, S.~Kolouri, S.~Shahrampour, and U.~Simsekli.
\newblock Statistical and topological properties of sliced probability
  divergences.
\newblock In {\em Proceedings of the Annual Conference on Advances in Neural
  Information Processing Systems (NeurIPS-2020)}, Online, Dec. 2020.

\bibitem[NDL{\etalchar{+}}23]{near2023guidelines}
Joseph~P Near, David Darais, Naomi Lefkovitz, Gary Howarth, et~al.
\newblock Guidelines for evaluating differential privacy guarantees.
\newblock Technical report, National Institute of Standards and Technology,
  2023.

\bibitem[NPA23]{nieminen2023empirical}
Valtteri~A Nieminen, Tapio Pahikkala, and Antti Airola.
\newblock Empirical evaluation of amplifying privacy by subsampling for gans to
  create differentially private synthetic tabular data.
\newblock {\em TKTP 2023: Annual Symposium for Computer Science}, 2023.

\bibitem[NTZ13]{nikolov2013geometry}
Aleksandar Nikolov, Kunal Talwar, and Li~Zhang.
\newblock The geometry of differential privacy: the sparse and approximate
  cases.
\newblock In {\em Proceedings of the forty-fifth annual ACM symposium on Theory
  of computing}, pages 351--360, 2013.

\bibitem[NWD20]{neunhoeffer2020private}
Marcel Neunhoeffer, Zhiwei~Steven Wu, and Cynthia Dwork.
\newblock Private post-gan boosting.
\newblock {\em arXiv preprint arXiv:2007.11934}, 2020.

\bibitem[PS22]{papernot2022hyperparameter}
Nicolas Papernot and Thomas Steinke.
\newblock Hyperparameter tuning with renyi differential privacy.
\newblock In {\em International Conference on Learning Representations}, 2022.

\bibitem[PW23]{polyanskiy2022information}
Yury Polyanskiy and Yihong Wu.
\newblock {\em Information theory: From coding to learning}.
\newblock Cambridge university press, 2023+.

\bibitem[RL21]{rakotomamonjy2021differentially}
Alain Rakotomamonjy and Ralaivola Liva.
\newblock Differentially private sliced {Wasserstein} distance.
\newblock In {\em International Conference on Machine Learning}, pages
  8810--8820. PMLR, 2021.

\bibitem[RLP{\etalchar{+}}20]{rosenblatt2020differentially}
Lucas Rosenblatt, Xiaoyan Liu, Samira Pouyanfar, Eduardo de~Leon, Anuj Desai,
  and Joshua Allen.
\newblock Differentially private synthetic data: Applied evaluations and
  enhancements.
\newblock {\em arXiv preprint arXiv:2011.05537}, 2020.

\bibitem[RPDB11]{rabin2011wasserstein}
J.~Rabin, G.~Peyr{\'e}, J.~Delon, and M.~Bernot.
\newblock Wasserstein barycenter and its application to texture mixing.
\newblock In {\em Proceedings of the International Conference on Scale Space
  and Variational Methods in Computer Vision (SSVM-2011)}, pages 435--446,
  Gedi, Israel, May 2011.

\bibitem[RTMT21]{ridgeway2021challenge}
Diane Ridgeway, Mary~F Theofanos, Terese~W Manley, and Christine Task.
\newblock Challenge design and lessons learned from the 2018 differential
  privacy challenges.
\newblock {\em NIST Technical Note 2151}, 2021.

\bibitem[Sma23]{opendp2023}
SmartNoise.
\newblock Smartnoise sdk: Tools for differential privacy on tabular data.
\newblock \url{https://github.com/opendp/smartnoise-sdk}, 2023.

\bibitem[SOT22]{stadler2022synthetic}
Theresa Stadler, Bristena Oprisanu, and Carmela Troncoso.
\newblock Synthetic data--anonymisation groundhog day.
\newblock In {\em 31st USENIX Security Symposium}, pages 1451--1468, 2022.

\bibitem[SSK12]{sugiyama2012density}
Masashi Sugiyama, Taiji Suzuki, and Takafumi Kanamori.
\newblock {\em Density ratio estimation in machine learning}.
\newblock Cambridge University Press, 2012.

\bibitem[SXGS20]{srivastava2020generative}
Akash Srivastava, Kai Xu, Michael~U. Gutmann, and Charles Sutton.
\newblock Generative ratio matching networks.
\newblock In {\em International Conference on Learning Representations}, 2020.

\bibitem[TKP19]{torkzadehmahani2019dp}
Reihaneh Torkzadehmahani, Peter Kairouz, and Benedict Paten.
\newblock Dp-cgan: Differentially private synthetic data and label generation.
\newblock In {\em Proceedings of the IEEE/CVF Conference on Computer Vision and
  Pattern Recognition Workshops}, pages 0--0, 2019.

\bibitem[TMH{\etalchar{+}}21]{tao2021benchmarking}
Yuchao Tao, Ryan McKenna, Michael Hay, Ashwin Machanavajjhala, and Gerome
  Miklau.
\newblock Benchmarking differentially private synthetic data generation
  algorithms.
\newblock {\em arXiv preprint arXiv:2112.09238}, 2021.

\bibitem[TTCY21]{takagi2021p3gm}
Shun Takagi, Tsubasa Takahashi, Yang Cao, and Masatoshi Yoshikawa.
\newblock P3gm: Private high-dimensional data release via privacy preserving
  phased generative model.
\newblock In {\em 2021 IEEE 37th International Conference on Data Engineering
  (ICDE)}, pages 169--180. IEEE, 2021.

\bibitem[TWB{\etalchar{+}}19]{tantipongpipat2019differentially}
Uthaipon Tantipongpipat, Chris Waites, Digvijay Boob, Amaresh~Ankit Siva, and
  Rachel Cummings.
\newblock Differentially private mixed-type data generation for unsupervised
  learning.
\newblock {\em arXiv preprint arXiv:1912.03250}, 1:13, 2019.

\bibitem[UV20]{ullman2020pcps}
Jonathan Ullman and Salil Vadhan.
\newblock {PCP}s and the hardness of generating synthetic data.
\newblock {\em Journal of Cryptology}, 33(4):2078--2112, 2020.

\bibitem[VAA{\etalchar{+}}22]{vietri2022private}
Giuseppe Vietri, Cedric Archambeau, Sergul Aydore, William Brown, Michael
  Kearns, Aaron Roth, Ankit Siva, Shuai Tang, and Steven~Z Wu.
\newblock Private synthetic data for multitask learning and marginal queries.
\newblock {\em Advances in Neural Information Processing Systems},
  35:18282--18295, 2022.

\bibitem[VCH{\etalchar{+}}22]{vinaroz2022hermite}
Margarita Vinaroz, Mohammad-Amin Charusaie, Frederik Harder, Kamil Adamczewski,
  and Mi~Jung Park.
\newblock Hermite polynomial features for private data generation.
\newblock In {\em International Conference on Machine Learning}, pages
  22300--22324. PMLR, 2022.

\bibitem[VFT{\etalchar{+}}19]{vayer2019sliced}
T.~Vayer, R.~Flamary, R.~Tavenard, L.~Chapel, and N.~Courty.
\newblock Sliced {Gromov-Wasserstein}.
\newblock In {\em Proceedings of the Annual Conference on Advances in Neural
  Information Processing Systems (NeurIPS-2019)}, Vancouver, Canada, Dec. 2019.

\bibitem[VTB{\etalchar{+}}20]{vietri2020new}
Giuseppe Vietri, Grace Tian, Mark Bun, Thomas Steinke, and Steven Wu.
\newblock New oracle-efficient algorithms for private synthetic data release.
\newblock In {\em International Conference on Machine Learning}, pages
  9765--9774. PMLR, 2020.

\bibitem[WKV09]{wang2009divergence}
Qing Wang, Sanjeev~R Kulkarni, and Sergio Verd{\'u}.
\newblock Divergence estimation for multidimensional densities via $ k
  $-nearest-neighbor distances.
\newblock {\em IEEE Transactions on Information Theory}, 55(5):2392--2405,
  2009.

\bibitem[WSH{\etalchar{+}}23]{wang2023postprocessing}
Hao Wang, Shivchander Sudalairaj, John Henning, Kristjan Greenewald, and Akash
  Srivastava.
\newblock Post-processing private synthetic data for improving utility on
  selected measures.
\newblock In {\em Conference on Neural Information Processing Systems}, 2023.

\bibitem[WY20]{wu2020polynomial}
Yihong Wu and Pengkun Yang.
\newblock Polynomial methods in statistical inference: Theory and practice.
\newblock {\em Foundations and Trends{\textregistered} in Communications and
  Information Theory}, 17(4):402--586, 2020.

\bibitem[XGJ{\etalchar{+}}22]{xu2022synthetic}
Kai Xu, Georgi Ganev, Emile Joubert, Rees Davison, Olivier Van~Acker, and Luke
  Robinson.
\newblock Synthetic data generation of many-to-many datasets via random graph
  generation.
\newblock In {\em International Conference on Learning Representations}, 2022.

\bibitem[XLW{\etalchar{+}}18]{xie2018differentially}
Liyang Xie, Kaixiang Lin, Shu Wang, Fei Wang, and Jiayu Zhou.
\newblock Differentially private generative adversarial network.
\newblock {\em arXiv preprint arXiv:1802.06739}, 2018.

\bibitem[YAS{\etalchar{+}}23]{yang2023differentially}
Yilin Yang, Kamil Adamczewski, Danica~J Sutherland, Xiaoxiao Li, and Mijung
  Park.
\newblock Differentially private neural tangent kernels for privacy-preserving
  data generation.
\newblock {\em arXiv preprint arXiv:2303.01687}, 2023.

\bibitem[ZCP{\etalchar{+}}17]{zhang2017privbayes}
Jun Zhang, Graham Cormode, Cecilia~M Procopiuc, Divesh Srivastava, and Xiaokui
  Xiao.
\newblock Priv{B}ayes: Private data release via {B}ayesian networks.
\newblock {\em ACM Transactions on Database Systems (TODS)}, 42(4):1--41, 2017.

\bibitem[ZWL{\etalchar{+}}21]{zhang2021privsyn}
Zhikun Zhang, Tianhao Wang, Ninghui Li, Jean Honorio, Michael Backes, Shibo He,
  Jiming Chen, and Yang Zhang.
\newblock $\{$PrivSyn$\}$: Differentially private data synthesis.
\newblock In {\em 30th USENIX Security Symposium (USENIX Security 21)}, pages
  929--946, 2021.

\end{thebibliography}
\bibliographystyle{abbrvnat}

\clearpage
\appendix
\section{Privacy Analysis}
\label{append:privacy}

Before diving into the proof of Theorem~\ref{thm:diffpriv}, we will first revisit the definition of R\'enyi differential privacy (RDP) \citep{mironov2017renyi,dwork2016concentrated,bun2016concentrated} and its connection with differential privacy (DP). Additionally, we will recall a useful property of DP known as privacy amplification by subsampling. 
\begin{defn}
    A randomized mechanism $\mathcal{M}: \Reals^{n\times d} \to \mathbb{O}$ satisfies $(\alpha,\epsilon)$-RDP if for any adjacent datasets $\bX, \bX' \in \Reals^{n\times d}$, we have
    \[
    \sup_{\bX \sim \bX'} \renyi( \mathcal{M}(\bX) || \mathcal{M}(\bX')) \leq \epsilon,
    \]
    where $\renyi$ is the R\'enyi-$\alpha$ divergence.
\end{defn}
\begin{prop}[Prop.~3 in \citep{mironov2017renyi}]
\label{prop:RDP}
If $\mathcal{M}$ is an $(\alpha, \epsilon)$-RDP mechanism, it also satisfies $(\epsilon + \frac{\ln(1/\delta)}{\alpha-1}, \delta)$-DP for all $\delta \in (0,1)$.
\end{prop}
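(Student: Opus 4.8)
The statement is the standard conversion from Rényi DP to $(\epsilon,\delta)$-DP, so the plan is to fix two adjacent datasets, write $P \defined \mathcal{M}(\bX)$ and $Q \defined \mathcal{M}(\bX')$, and verify the defining inequality of Definition~\ref{def:2} directly for an arbitrary measurable event $\mathbb{S}$. The RDP hypothesis supplies a bound only on the Rényi divergence, $\renyi(P\|Q) \le \epsilon$, which for $\alpha > 1$ is equivalent to the single moment bound $\EEE{Q}{(\dif P/\dif Q)^\alpha} \le e^{(\alpha-1)\epsilon}$. The whole proof then amounts to extracting a hypothesis-testing guarantee from this one inequality.

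The key device is to split each event according to the size of the privacy loss. Set $\epsilon' = \epsilon + \tfrac{\ln(1/\delta)}{\alpha-1}$ and let $B \defined \{x : \tfrac{\dif P}{\dif Q}(x) > e^{\epsilon'}\}$ be the \emph{bad} set where the likelihood ratio exceeds the target multiplicative factor. For any $\mathbb{S}$ I would decompose $\Pr(P\in\mathbb{S}) = \Pr(P \in \mathbb{S}\cap B^c) + \Pr(P\in\mathbb{S}\cap B)$. On $B^c$ the density ratio is at most $e^{\epsilon'}$ pointwise, so integrating $\dif P = \tfrac{\dif P}{\dif Q}\,\dif Q$ over $\mathbb{S}\cap B^c$ gives $\Pr(P\in\mathbb{S}\cap B^c) \le e^{\epsilon'}\Pr(Q\in\mathbb{S})$, which produces the multiplicative term. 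The second term is bounded crudely by $\Pr(P\in B)$, so it remains only to show $\Pr(P\in B)\le\delta$.

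The main (and essentially only) obstacle is controlling this tail probability, and the crux is a change-of-measure identity. Writing $B = \{(\dif P/\dif Q)^{\alpha-1} > e^{(\alpha-1)\epsilon'}\}$—valid since $\alpha>1$ makes $t\mapsto t^{\alpha-1}$ increasing—and applying Markov's inequality under $P$ yields $\Pr(P\in B) \le e^{-(\alpha-1)\epsilon'}\,\EEE{P}{(\dif P/\dif Q)^{\alpha-1}}$. The identity $\EEE{P}{(\dif P/\dif Q)^{\alpha-1}} = \int p^\alpha q^{1-\alpha} = \EEE{Q}{(\dif P/\dif Q)^{\alpha}}$ converts this expectation back into the Rényi moment, which the RDP hypothesis bounds by $e^{(\alpha-1)\epsilon}$. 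Combining, $\Pr(P\in B)\le e^{(\alpha-1)(\epsilon-\epsilon')}$, and the choice of $\epsilon'$ forces $(\alpha-1)(\epsilon-\epsilon') = \ln\delta$, so $\Pr(P\in B)\le\delta$ exactly. Substituting both bounds into the decomposition gives $\Pr(P\in\mathbb{S})\le e^{\epsilon'}\Pr(Q\in\mathbb{S})+\delta$; since $\bX,\bX',\mathbb{S}$ were arbitrary, this is precisely $(\epsilon',\delta)$-DP. I would close by noting that the argument is symmetric in $\bX$ and $\bX'$ and uses no structural property of $\mathcal{M}$ beyond the Rényi bound, and that the specific form of $\epsilon'$ enters only to pin the tail probability down to exactly $\delta$.
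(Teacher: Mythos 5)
Your proof is correct and complete: the decomposition over the bad set $B$, the Markov bound under $P$ applied to $(\dif P/\dif Q)^{\alpha-1}$, and the change-of-measure identity $\EEE{P}{(\dif P/\dif Q)^{\alpha-1}} = \EEE{Q}{(\dif P/\dif Q)^{\alpha}}$ together give exactly $\Pr(P \in \mathbb{S}) \leq e^{\epsilon'}\Pr(Q\in\mathbb{S}) + \delta$ with the stated $\epsilon'$. The paper does not prove this statement itself---it imports it as Proposition~3 of \citep{mironov2017renyi}---and your argument is essentially the original proof given there (Mironov bounds the same tail event by restricting the Rényi moment integral to $B$, which is your Markov step in disguise), so there is nothing to flag beyond the routine observation that finiteness of $\renyi(P\|Q)$ for $\alpha>1$ already forces $P \ll Q$, justifying the density ratio you use throughout.
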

We remind readers that two datasets $\bX$ and $\bX'$ are considered adjacent if they differ in a single row, say the $i$-th row, such that $\|\bX_{i,:} - \bX'_{i,:}\|_2 \leq 1$ where $\bX_{i,:}$ and $\bX'_{i,:}$ are the $i$-th row of $\bX$ and $\bX'$, respectively. This condition applies, for example, to unbounded DP (add/remove one record) when the $L_2$ norm of each record is upper bounded by $1$. 

We revisit privacy amplification by subsampling, a useful technique for handling large-scale datasets that can save computational resources and memory. It proposes that the privacy guarantees of a DP mechanism can be improved by randomly subsampling the private dataset before applying the DP mechanism \cite{kasiviswanathan2011can,li2012sampling,balle2018privacy}.

\begin{lem}
\label{lem:amplification}
Given a dataset $\bX = [\bx_1^T, \cdots,\bx_n^T]^T$, $\mathsf{PoissonSample}_{\tau}$ independently draws Bernoulli random variables $\sigma_i \sim \mathsf{Bern}(\tau)$ for $i \in [n]$ and outputs a subset $[\bx_i^T]^T_{\sigma_i = 1, i \in [n]}$. If a mechanism $\mathcal{M}$ is $(\epsilon, \delta)$-DP, then $\mathcal{M} \circ \mathsf{PoissonSample}_{\tau}$ satisfies $(\epsilon', \delta')$-DP where $\epsilon' = \log(1 + \tau(e^\epsilon-1))$ and $\delta' = \tau\delta$.
\end{lem}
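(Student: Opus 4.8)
The plan is to establish the two-sided $(\epsilon',\delta')$-DP inequality directly from Definition~\ref{def:2}, working under the add/remove adjacency emphasized in the remark preceding this lemma. First I would fix adjacent datasets and, without loss of generality, take $\bX' = \bX \cup \{\bz\}$ to be the larger one, so that $\bX$ arises by deleting the single record $\bz$. Writing $\nu \defined \mathcal{M}\circ\mathsf{PoissonSample}_{\tau}(\bX)$ and $\nu' \defined \mathcal{M}\circ\mathsf{PoissonSample}_{\tau}(\bX')$ for the two output laws, the goal reduces to verifying $\nu'(\mathbb{S}) \leq e^{\epsilon'}\nu(\mathbb{S}) + \delta'$ and $\nu(\mathbb{S}) \leq e^{\epsilon'}\nu'(\mathbb{S}) + \delta'$ for every measurable event $\mathbb{S}$, with $e^{\epsilon'} = 1 + \tau(e^\epsilon - 1)$ and $\delta' = \tau\delta$.

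The key structural step is a mixture decomposition obtained by coupling the two subsampling processes through the inclusion indicator of the extra record $\bz$. Because the Bernoulli draws $\sigma_i$ are independent, subsampling $\bX'$ is equivalent to first subsampling $\bX$ to obtain a random subset $S$, and then independently inserting $\bz$ into $S$ with probability $\tau$. Conditioning on whether $\bz$ is kept yields $\nu' = (1-\tau)\nu + \tau\nu_1$, where $\nu_1$ is the law of $\mathcal{M}(S \cup \{\bz\})$ for the same $S$ drawn from the subsampling of $\bX$. For each fixed $S$, the datasets $S$ and $S \cup \{\bz\}$ are adjacent, so the $(\epsilon,\delta)$-DP of $\mathcal{M}$ gives $\Pr[\mathcal{M}(S \cup \{\bz\}) \in \mathbb{S}] \leq e^\epsilon \Pr[\mathcal{M}(S) \in \mathbb{S}] + \delta$ together with the reverse inequality; averaging over $S$ (using the identical $S$ on both sides of the coupling) transfers these to $\nu_1(\mathbb{S}) \leq e^\epsilon \nu(\mathbb{S}) + \delta$ and $\nu(\mathbb{S}) \leq e^\epsilon \nu_1(\mathbb{S}) + \delta$.

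Substituting the first of these into the mixture immediately gives $\nu'(\mathbb{S}) \leq [(1-\tau) + \tau e^\epsilon]\nu(\mathbb{S}) + \tau\delta = (1 + \tau(e^\epsilon-1))\nu(\mathbb{S}) + \tau\delta$, which is precisely the claimed bound in the ``add'' direction. For the ``remove'' direction I would instead insert the reverse bound $\nu_1(\mathbb{S}) \geq e^{-\epsilon}(\nu(\mathbb{S}) - \delta)$ into the mixture to obtain the lower estimate $\nu'(\mathbb{S}) \geq [(1-\tau) + \tau e^{-\epsilon}]\nu(\mathbb{S}) - \tau e^{-\epsilon}\delta$, and then rearrange to bound $\nu(\mathbb{S})$ from above. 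To close this direction I would verify the two elementary inequalities $\tfrac{1}{(1-\tau) + \tau e^{-\epsilon}} \leq 1 + \tau(e^\epsilon - 1)$ and $\tfrac{\tau e^{-\epsilon}}{(1-\tau) + \tau e^{-\epsilon}} \leq \tau$; after clearing denominators both collapse to the trivial facts $\tau \leq 1$ and $\epsilon \geq 0$.

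The main obstacle is the reverse (``remove'') direction: a naive appeal to the symmetry of DP does not reproduce the asymmetric multiplier $1 + \tau(e^\epsilon-1)$, so one must route the argument through the mixture \emph{lower} bound and then confirm that the constants emerging from the rearrangement are dominated by the advertised $\epsilon'$ and $\delta'$ via the two elementary inequalities above. Everything else is bookkeeping, provided the coupling in the mixture decomposition is set up so that the same realization of $S$ appears in both $\nu$ and $\nu_1$.
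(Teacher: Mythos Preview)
The paper does not actually prove this lemma: it is stated in Appendix~\ref{append:privacy} as a known result, with the surrounding text citing \cite{kasiviswanathan2011can,li2012sampling,balle2018privacy} for privacy amplification by subsampling, and no proof is given anywhere in the manuscript. So there is no ``paper's own proof'' to compare against.

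That said, your proposal is the standard argument and is correct. The coupling that writes $\nu' = (1-\tau)\nu + \tau\nu_1$ by conditioning on the inclusion indicator of the extra record is exactly the right structural step, and averaging the per-$S$ DP inequalities over the common subsample $S$ is legitimate because the coupling uses the same $S$ on both sides. One small quibble: your claim that the first elementary inequality ``collapses to $\tau \leq 1$'' is a bit imprecise---after clearing denominators it reduces to $\tau(1-\tau)\bigl(e^{\epsilon} + e^{-\epsilon} - 2\bigr) \geq 0$, which holds because $\cosh$ is minimized at $0$; this is still trivially true, just not literally ``$\tau\leq 1$.'' Also note that the paper's Definition~\ref{def:1} is a bounded-norm replace-one adjacency, whereas Poisson amplification is naturally proved (and typically stated) under add/remove adjacency; your opening sentence already flags this, and the paper's own remark after Proposition~\ref{prop:RDP} explicitly connects the two, so this is consistent with how the lemma is being used.
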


We introduce some notations and a result from \citep{gil2013renyi} that will be used in proving the following lemma. For a matrix $\bA\in\Reals^{m\times n}$, its determinant is denoted by $|\bA|$. The vectorization of $\bA$ is defined as
\begin{align*}
    \mathrm{vec}(\bA) \defined [a_{1,1}, \cdots, a_{m,1}, \cdots, a_{1,n},\cdots, a_{m,n}]^T.
\end{align*}
For two matrices $\bA \in \Reals^{m\times n}$ and $\bB \in \Reals^{p\times q}$, their Kronecker product $\bA \otimes \bB$ is the $pm \times qn$ block matrix:
\begin{align*}
    \bA \otimes \bB
    \defined 
    \begin{bmatrix}
    a_{1,1}\bB & \cdots & a_{1,n} \bB\\
    \vdots & \ddots & \vdots\\
    a_{m,1}\bB & \cdots & a_{m,n} \bB
    \end{bmatrix}
    .
\end{align*}
We recall a result from \citep[Table~2]{gil2013renyi}:
\begin{align}
&\renyi(\mathcal{N}(\bmu,\bSigma) \| \mathcal{N}(\bmu',\bSigma'))\nonumber \\
&= \frac{\alpha}{2} (\bmu-\bmu')^T ((1-\alpha) \bSigma + \alpha \bSigma')^{-1}(\bmu-\bmu') - \frac{1}{2(\alpha - 1)} \ln{\frac{|(1-\alpha) \bSigma + \alpha \bSigma'|}{|\bSigma|^{1-\alpha} |\bSigma'|^{\alpha}}}, \label{eq::renyi_normal}
\end{align}
whenever $\alpha \bSigma^{-1} + (1 - \alpha) {\bSigma'}^{-1} > 0 $.

Next, we establish the RDP guarantees for our mechanism. 
\begin{lem}
\label{lem:slicedRenyi}
Assume $\alpha > 1$ and $\gamma \defined \sigma^{-2} (\alpha^2 - \alpha) < d$. Let $\bU \in \mathbb{R}^{d \times m'}$ and $\bV\in \mathbb{R}^{n \times m'}$ be two random matrices whose elements are drawn independently from $\mathcal{N}(0, d^{-1})$ and $\mathcal{N}(0, \sigma^2)$, respectively. Then the mechanism $\mathcal{M}(\bX) = (\bU, \bX \bU + \bV)$ satisfies $(\alpha,\epsilon)$-RDP where
\begin{align*}
    \epsilon = \frac{m' \alpha}{2\sigma^2 (d - \gamma)}.
\end{align*}
\end{lem}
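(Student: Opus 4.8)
The plan is to exploit the fact that, conditioned on the released slicing matrix $\bU$, the mechanism's output is Gaussian with a data-independent covariance, so that only the means differ across adjacent datasets. First I would record the correct reduction for the disclosed $\bU$: since $\bU$ is drawn independently of $\bX$, its marginal law is identical under $\mathcal{M}(\bX)$ and $\mathcal{M}(\bX')$, and for R\'enyi divergence this gives
\begin{align*}
\renyi(\mathcal{M}(\bX)\|\mathcal{M}(\bX')) = \frac{1}{\alpha-1}\ln \EEE{\bU}{\exp\!\big((\alpha-1)\,\renyi(P_{\bX\bU+\bV\mid\bU}\,\|\,P_{\bX'\bU+\bV\mid\bU})\big)}.
\end{align*}
This is precisely the step that accounts for outputting $\bU$ (and, per Remark~\ref{rem::slice_mech}, the one mishandled in \citep{rakotomamonjy2021differentially}).

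Next I would evaluate the inner conditional divergence. Given $\bU$, the matrix $\bX\bU+\bV$ has i.i.d.\ $\mathcal{N}(0,\sigma^2)$ entries around the mean $\bX\bU$, so its vectorization is $\mathcal{N}(\mathrm{vec}(\bX\bU),\sigma^2\bI_{nm'})$. Writing $\bd \defined (\bX_{i,:}-\bX'_{i,:})^T$ for the single differing row, the two conditional means differ only in the $i$-th block, by $\bU^T\bd$, while the covariances coincide. Substituting equal covariances into \eqref{eq::renyi_normal} kills the log-determinant term and leaves
\begin{align*}
\renyi(P_{\bX\bU+\bV\mid\bU}\,\|\,P_{\bX'\bU+\bV\mid\bU}) = \frac{\alpha}{2\sigma^2}\,\|\bU^T\bd\|_2^2 .
\end{align*}

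Then I would take the expectation over $\bU$. Because the columns of $\bU$ are independent $\mathcal{N}(\bm{0},d^{-1}\bI_d)$, the vector $\bU^T\bd$ is $\mathcal{N}(\bm{0},\tfrac{\|\bd\|_2^2}{d}\bI_{m'})$, so $\|\bU^T\bd\|_2^2$ equals $\tfrac{\|\bd\|_2^2}{d}$ times a $\chi^2_{m'}$ variable. Applying the chi-square moment generating function---valid exactly under the hypothesis $\gamma<d$, which keeps its argument below $1/2$ since $\|\bd\|_2\le 1$---and using $(\alpha-1)\alpha/\sigma^2=\gamma$ gives
\begin{align*}
\EEE{\bU}{\exp\!\big((\alpha-1)\tfrac{\alpha}{2\sigma^2}\|\bU^T\bd\|_2^2\big)} = \Big(1-\tfrac{\gamma\|\bd\|_2^2}{d}\Big)^{-m'/2}.
\end{align*}
Taking logarithms, applying the elementary bound $-\ln(1-x)\le x/(1-x)$, using $(\alpha-1)^{-1}\gamma=\alpha/\sigma^2$, and invoking monotonicity of $x\mapsto x/(d-\gamma x)$ to set $\|\bd\|_2^2=1$ (its adjacency maximum) yields the claimed $\epsilon = m'\alpha/\big(2\sigma^2(d-\gamma)\big)$.

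The main obstacle is the very first reduction: one must treat $(\bU,\bX\bU+\bV)$ as a single joint output and compose R\'enyi divergence correctly over the released randomness $\bU$, rather than conditioning on a fixed $\bU$ and bounding the worst-case projection direction. Handling this correctly is exactly what both repairs the guarantee and produces the improvement factor relative to a deterministic-projection baseline, since averaging over random $\bU$ concentrates $\|\bU^T\bd\|_2^2$ near $m'\|\bd\|_2^2/d$ rather than at its adversarial maximum.
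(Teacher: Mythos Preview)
Your proposal is correct and reaches the exact same bound, but the route differs from the paper's. You condition on the released $\bU$ and use the identity $\renyi(P_{A,B}\|Q_{A,B})=\tfrac{1}{\alpha-1}\ln\EEE{A}{e^{(\alpha-1)\renyi(P_{B|A}\|Q_{B|A})}}$ (valid because $P_A=Q_A$), which reduces the inner term to the equal-covariance Gaussian case $\tfrac{\alpha}{2\sigma^2}\|\bU^T\bd\|_2^2$; the outer expectation is then a chi-square MGF. The paper instead treats the pair $(\bU,\bX\bU+\bV)$ as one zero-mean Gaussian vector with data-\emph{dependent} covariance $d^{-1}\bB\otimes\bI_{m'}$ and applies the different-covariance R\'enyi formula \eqref{eq::renyi_normal}, computing block determinants to obtain $\tfrac{m'}{2(\alpha-1)}[\ln(d\sigma^2)-\ln(d\sigma^2+\Delta_{ii})]$ with $\Delta_{ii}=(\alpha-\alpha^2)\|\bd\|_2^2$. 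The two intermediate expressions are identical (both equal $-\tfrac{m'}{2(\alpha-1)}\ln(1-\gamma\|\bd\|_2^2/d)$), and your inequality $-\ln(1-x)\le x/(1-x)$ is the same bound the paper gets from concavity of $\log$. Your argument is more elementary---it avoids Kronecker products and block-determinant bookkeeping, and it makes the privacy amplification from random slicing explicit as a chi-square moment computation---while the paper's joint-Gaussian framing is more mechanical once set up and checks the validity condition for \eqref{eq::renyi_normal} directly.
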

\begin{proof}[Proof of Lemma~\ref{lem:slicedRenyi}]
Since $\bX$ is not random, the mechanism $\mathcal{M}(\bX)$ maps $\bX$ to a $(d m' + n m')$-variate Gaussian distribution with mean zero and covariance
\begin{align}
 \mathrm{Cov}\left(\left[\begin{array}{c} \mathrm{vec}(\bU^T) \\ \mathrm{vec}((\bX \bU + \bV)^T) \end{array}\right]\right) &=  d^{-1}\left[\begin{array}{cc}  \bI_{dm'} &  \bX^T \otimes \bI_{m'}\\ \bX \otimes \bI_{m'}  &  (\bX \bX^T \otimes \bI_{m'}  ) + d \sigma^2 \bI_{n m'} \end{array}\right]\nonumber\\
 &=d^{-1}\left[\begin{array}{cc}  \bI_{d} &  \bX^T\\  \bX &   \bX \bX^T + d \sigma^2 \bI_{n} \end{array}\right] \otimes \bI_{m'} \nonumber\\
 &= d^{-1}   \bB \otimes \bI_{m'}, \label{eq::cov_mech}
\end{align}
where we define $\bB \defined \left[\begin{array}{cc}  \bI_{d} &  \bX^T\\  \bX &   \bX \bX^T + d \sigma^2 \bI_{n} \end{array}\right]$.
Combining \eqref{eq::renyi_normal} with \eqref{eq::cov_mech} yields
\begin{align*}
    \renyi(\mathcal{M}(\bX) \| \mathcal{M}(\bX')) &= - \frac{m'}{2(\alpha - 1)} \ln{\frac{|(1-\alpha) \bB + \alpha \bB'|}{|\bB|^{1-\alpha} |\bB'|^{\alpha}}}\\
    &=  \frac{m'}{2(\alpha - 1)}\left[\big((1-\alpha) \ln |\bB| + \alpha \ln|\bB'|\big)  -\ln{|(1-\alpha) \bB + \alpha \bB'|}\right].
\end{align*}
Next, we compute the determinants in the above R\'enyi divergence. Using the formula for determinants of block matrices, we know
\[
|\bB| = |\bB'| = | d\sigma^2 \bI_n|.
\]
For the linear combination, we have
\begin{align*}
    &|(1-\alpha) \bB + \alpha \bB'| \\
    &= \left|\left[\begin{array}{cc}  \bI_{d} &  (1-\alpha) X^T + \alpha {\bX'}^T \\  (1-\alpha) \bX + \alpha {\bX'} &   (1-\alpha) \bX \bX^T + \alpha \bX' {\bX'}^T + d\sigma^2 \bI_{n} \end{array}\right]\right|\\
    &= \left|d\sigma^2 \bI_{n} + \left[ (1-\alpha) \bX \bX^T + \alpha \bX' {\bX'}^T  - \left((1-\alpha) \bX + \alpha {\bX'}\right)\left((1-\alpha) \bX + \alpha {\bX'}\right)^T\right]\right|.
\end{align*}
We denote 
\begin{align*}
\bDelta &\defined (1-\alpha) \bX \bX^T + \alpha \bX' {\bX'}^T  - \left((1-\alpha) \bX + \alpha {\bX'}\right)\left((1-\alpha) \bX + \alpha {\bX'}\right)^T\\
&=  - \alpha (\bX \bX^T - {\bX}' {\bX'}^T) - \alpha^2 (\bX - \bX') (\bX - {\bX}')^T + \alpha \bX (\bX - {\bX}')^T + \alpha (\bX - {\bX}') \bX^T\\
&= - \alpha (-(\bX-{\bX}') (\bX - {\bX}')^T + \bX (\bX-{\bX}')^T + (\bX-{\bX}') \bX^T) \\&\quad- \alpha^2 (\bX - {\bX}') (\bX - {\bX}')^T + \alpha \bX (\bX - {\bX}')^T + \alpha (\bX - {\bX}') \bX^T\\
&= (\alpha - \alpha^2)(\bX - {\bX}') (\bX - {\bX}')^T.
\end{align*}
Since $\bX$ and ${\bX}'$ only differ in the $i$-th row, only one entry of this matrix can be nonzero, specifically,
\[
\alpha - \alpha^2 \leq \Delta_{ii} \leq 0.
\]
Then
\begin{align*}
  \renyi( \mathcal{M}(\bX) || \mathcal{M}(\bX')) &= \frac{m'}{2(\alpha - 1)}\left[\big((1-\alpha) \ln |\bB| + \alpha \ln|{\bB}'|\big)  -\ln{|(1-\alpha) \bB + \alpha \bB'|}\right]\\
  &= \frac{m'}{2(\alpha - 1)}\left[ \ln| d\sigma^2 \bI_n| - \ln | d\sigma^2 \bI_n + \bDelta| \right]\\
  &= \frac{m'}{2(\alpha - 1)}\left[ \ln( d\sigma^2 ) - \ln ( d\sigma^2  + \Delta_{ii} ) \right].
\end{align*}
Recall the assumption that $|\Delta_{ii}| \leq \alpha^2 - \alpha \leq  \gamma \sigma^2$ for $\gamma \in (0,d)$. Then using the fact that $\log (a)$ is concave in $a$ and has derivative $1/a$ for $a > 0$,
\begin{align*}
    \renyi( \mathcal{M}(\bX) || \mathcal{M}(\bX'))
    &\leq -\frac{m'}{2(\alpha - 1)}\frac{1}{d \sigma^2 + \Delta_{ii} } \Delta_{ii}\\
    &\leq \frac{m'}{2(\alpha - 1)}\frac{1}{d \sigma^2 + (\alpha - \alpha^2) } (\alpha^2 - \alpha)\\
    &\leq \frac{m'}{2(\alpha - 1)}\frac{1}{(d - \gamma) \sigma^2  } (\alpha^2 - \alpha)\\
    &\leq \frac{m'}{ 2(d - \gamma)\sigma^2(\alpha - 1)}  (\alpha^2 - \alpha) \\
    &\leq \frac{m' \alpha}{2\sigma^2 (d - \gamma)}.
\end{align*}
\end{proof}
\begin{proof}[Proof of Theorem~\ref{thm:diffpriv}]
Theorem~\ref{thm:diffpriv} follows directly from combining Lemma~\ref{lem:slicedRenyi} with Proposition~\ref{prop:RDP}. 
\end{proof}
Next, we establish a privacy bound assuming the projection matrix $\bU$ is deterministic. Comparing this with Theorem~\ref{thm:diffpriv}, we observe that by randomly selecting the projection matrix, we can achieve a tighter privacy bound, even if it is disclosed by the privacy mechanism.
\begin{prop}
\label{prop::DP_det_proj}
Let $\bX \in \mathbb{R}^{n \times d}$ represents the original dataset. Let $\bV\in \mathbb{R}^{n \times m'}$ be a random noise matrix with each element independently drawn from $\rV_{i,j} \sim \mathcal{N}(0,\sigma^2)$. Let $\bU \in \mathbb{R}^{d \times m'}$ be a deterministic matrix such that for any $j \in [m]$
\begin{align*}
    \bU_{:,(j-1)k+1},\cdots, \bU_{:,jk} \text{ are orthonormal} 
\end{align*}
The privacy mechanism $\mathcal{M}_{\text{det}}(\bX) \defined \bX \bU + \bV$ satisfies 
\begin{align*}
    \left(\frac{m\alpha}{2\sigma^2}  + \frac{\ln(1/\delta)}{\alpha-1},\delta\right)-\text{DP}.
\end{align*}
\end{prop}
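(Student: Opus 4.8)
The plan is to mirror the proof of Lemma~\ref{lem:slicedRenyi}: first bound the R\'enyi divergence $\renyi(\mathcal{M}_{\text{det}}(\bX) \| \mathcal{M}_{\text{det}}(\bX'))$ for an arbitrary pair of adjacent datasets, thereby establishing $(\alpha, \frac{m\alpha}{2\sigma^2})$-RDP, and then invoke Proposition~\ref{prop:RDP} to convert this to the claimed $(\frac{m\alpha}{2\sigma^2} + \frac{\ln(1/\delta)}{\alpha-1}, \delta)$-DP guarantee. The key structural simplification relative to Lemma~\ref{lem:slicedRenyi} is that $\bU$ is now \emph{deterministic}, so conditioned on the data the only randomness is the additive noise $\bV$. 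Consequently $\mathcal{M}_{\text{det}}(\bX)$ is a Gaussian vector with mean $\mathrm{vec}((\bX\bU)^T)$ and covariance $\sigma^2 \bI_{nm'}$, and --- crucially --- $\mathcal{M}_{\text{det}}(\bX)$ and $\mathcal{M}_{\text{det}}(\bX')$ share the \emph{same} covariance $\sigma^2 \bI_{nm'}$.

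Next I would apply the Gaussian R\'enyi formula \eqref{eq::renyi_normal} with $\bSigma = \bSigma' = \sigma^2 \bI_{nm'}$. Because the covariances coincide, the log-determinant term $\ln\frac{|(1-\alpha)\bSigma + \alpha\bSigma'|}{|\bSigma|^{1-\alpha}|\bSigma'|^{\alpha}}$ vanishes identically; this removes the need for both the $\gamma < d$ assumption and the concavity argument that Lemma~\ref{lem:slicedRenyi} required, and the positivity condition $\alpha\bSigma^{-1} + (1-\alpha){\bSigma'}^{-1} = \sigma^{-2}\bI > 0$ holds automatically. What remains is the Mahalanobis term, so that
\[
\renyi(\mathcal{M}_{\text{det}}(\bX) \| \mathcal{M}_{\text{det}}(\bX')) = \frac{\alpha}{2\sigma^2} \big\| (\bX - \bX')\bU \big\|_F^2 .
\]

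The heart of the argument is then bounding $\|(\bX - \bX')\bU\|_F^2$. Since $\bX$ and $\bX'$ differ only in row $i$, with $\bd \defined \bX_{i,:} - \bX'_{i,:}$ satisfying $\|\bd\|_2 \leq 1$, this Frobenius norm collapses to $\|\bd\bU\|_2^2$. I would split $\bU$ into its $m$ column-blocks $\bU^{(j)} = \bU_{:,(j-1)k+1:jk}$; the hypothesis gives $(\bU^{(j)})^T \bU^{(j)} = \bI_k$, so $\bU^{(j)}(\bU^{(j)})^T$ is an orthogonal projection onto a $k$-dimensional subspace and hence $\|\bd\bU^{(j)}\|_2^2 \leq \|\bd\|_2^2 \leq 1$. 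Summing the $m$ blocks yields $\|\bd\bU\|_2^2 \leq m$, establishing the RDP bound $\frac{m\alpha}{2\sigma^2}$, after which Proposition~\ref{prop:RDP} completes the proof. I expect the only genuine subtlety to be this projection step: it is precisely the per-block orthonormality that caps each slice's contribution at $\|\bd\|_2^2$, whereas non-orthonormal columns could amplify $\bd$ and break the bound. It is also worth flagging here that, absent the averaging over random directions used in Theorem~\ref{thm:diffpriv}, this deterministic bound does not enjoy the $\frac{k}{d-\gamma}$ improvement, which is exactly the comparison the proposition is meant to highlight.
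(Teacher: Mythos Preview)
Your proposal is correct and follows essentially the same route as the paper: identify $\mathcal{M}_{\text{det}}(\bX)$ as Gaussian with shared covariance $\sigma^2\bI$, apply \eqref{eq::renyi_normal} so the log-determinant term drops out, and bound $\|\bd\bU\|_2^2 \leq m$ via the per-block orthonormality before invoking Proposition~\ref{prop:RDP}. The only cosmetic difference is that the paper handles the key inequality by rotating coordinates so that $\bd = (x,0,\dots,0)$ and then summing squared first entries of orthonormal columns, whereas your projection-operator argument $\|\bd\bU^{(j)}\|_2^2 \leq \|\bd\|_2^2$ reaches the same bound more directly.
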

\begin{proof}[Proof of Proposition~\ref{prop::DP_det_proj}]
The privacy mechanism $\mathcal{M}_{\text{det}}$ maps $\bX$ to a $n m'$-variate Gaussian distribution with mean and covariance being:
\begin{align*}
    \mathrm{mean}(\bX\bU+\bV) = \mathrm{vec}(\bX\bU),\quad \mathrm{Cov}(\bX\bU+\bV)
    = \sigma^2 \bI_{n m'}.
\end{align*}
Using \eqref{eq::renyi_normal} yields
\begin{align*}
    \renyi(\mathcal{M}(\bX) \| \mathcal{M}(\bX')) 
    &= \frac{\alpha}{2} \mathrm{vec}(\bX\bU - \bX'\bU)^T (\sigma^2 \bI_{nm'})^{-1}\mathrm{vec}(\bX\bU -\bX'\bU)\\
    &= \frac{\alpha}{2\sigma^2} \mathrm{vec}((\bX - \bX')\bU)^T\mathrm{vec}((\bX -\bX')\bU)\\
    &= \frac{\alpha}{2\sigma^2} \|(\bX_{i,:} - \bX_{i,:}') \bU\|_2^2.
\end{align*}
Without loss of generality, we assume
\begin{align*}
    \bX_{i,:} - \bX_{i,:}'
    = [x, 0, \cdots,0]\quad \text{with } |x| \leq 1.
\end{align*}
Additionally, recall the assumption that for any $j\in [m]$, $\bU_{:,(j-1)k+1},\cdots, \bU_{:,jk}$ are orthonormal. Hence, $U_{1,(j-1)k+1}^2 + \cdots + U_{1,jk+1}^2 \leq 1$, leading to
\begin{align*}
    \renyi(\mathcal{M}(\bX) \| \mathcal{M}(\bX'))
    \leq \frac{\alpha}{2\sigma^2} \sum_{j=1}^{m'} U_{1,j}^2
    \leq \frac{\alpha m}{2\sigma^2}.
\end{align*}
As a result, our privacy mechanism is $(\alpha, \frac{\alpha m}{2\sigma^2})$-RDP. Combining it with Proposition~\ref{prop:RDP} yields the desired DP guarantee. 
\end{proof}

\section{Other Omitted Proofs}
\label{append:proofs}
\begin{proof}[Proof of Proposition~\ref{prop::SD_zero}]
   The property $\sfdiv(P_{\rX}\|Q_{\rX}) \geq 0$ follows by the non-negativity of the $f$-divergence in the integrand.
   If $P_X = Q_X$, then clearly
   \[
   \sfdiv(P_{\rX}\|Q_{\rX}) = \frac{1}{\rm{vol}(\mathbb{S}_k(\Reals^d))}\int_{\btheta \in \mathbb{S}_k(\Reals^d)} \fdiv(P_{\btheta^T \rX + \rN} \| Q_{\btheta^T \rX + \rN}) \dif \btheta = 0
   \]
   since the $f$-divergences $\fdiv(P_{\btheta^T \rX + \rN} \| Q_{\btheta^T \rX + \rN})$ are all zero by definition. 

   Conversely, by the nonnegativity of the $f$-divergence, $\sfdiv(P_{\rX}\|Q_{\rX}) = 0$ implies
   \[
   \fdiv(P_{\btheta^T \rX + \rN} \| Q_{\btheta^T \rX + \rN}) = 0
   \]
   for almost every $\btheta \in \mathbb{S}_k(\Reals^d)$. Additionally, since $f$ is strictly convex at $1$, we have
   \[
   P_{\btheta^T \rX + \rN} = Q_{\btheta^T \rX + \rN}.
   \]
   As a result, by the invertibility of Gaussian convolution,
   \[
   P_{\btheta^T \rX } = Q_{\btheta^T \rX }
   \]
   for almost every $\btheta \in \mathbb{S}_k(\Reals^d)$
   since $f$-divergence nullifies iff the arguments are equal. Since the distributions are equal, the moment generating functions of these projections are equal, i.e. for all $t \in \mathbb{R}^k$
   \[
   \mathbb{E}_{Z\sim P_{\btheta^T \rX }}[e^{t^TZ}] = \mathbb{E}_{Z\sim Q_{\btheta^T \rX }}[e^{t^T Z}]
   \]
   i.e.
   \[
   \mathbb{E}_{X \sim P_{\rX}} [e^{t^T \btheta^T \rX}] = \mathbb{E}_{X \sim Q_{\rX}} [e^{t^T \btheta^T \rX}].
   \]
   Since $\btheta t$ is dense in $\mathbb{R}^d$ when $\btheta\in \mathbb{S}_k(\Reals^d)$ and $t \in \mathbb{R}^k$, we have that for any $s \in \mathbb{R}^d$,
   \[
   \mathbb{E}_{X \sim P_{\rX}} [e^{s^T \rX}] = \mathbb{E}_{X \sim Q_{\rX}} [e^{s^T  \rX}].
   \]
   But these are just the moment generating functions of $P_{\rX}$ and $Q_{\rX}$. Since the moment generating functions exist and are equal for all $s$, the densities $P_{\rX}$, $Q_{\rX}$ must be equal.
   
\end{proof}
\begin{proof}[Proof of Corollary \ref{cor:consist}]

By the assumption that $\hat{\mathrm{D}}_f$ is a consistent estimator of the $f$-divergence, $b \rightarrow \infty$ with $\sigma$ constant implies that for any fixed $\omega$,
\begin{equation}
\lim_{n,b \rightarrow \infty} L(\omega) = \frac{1}{m} \sum_{s=1}^m \fdiv\left(P_{\btheta_s^T G_{\omega}(Z) + \rN} \| P_{\btheta_s^T X + \rN}\right),
\label{eq:11}
\end{equation}
where $\rN \sim \mathcal{N}(\mathbf{0}, \sigma^2 \mathbf{I}_k)$. 

In the first case, we assume $G_{\omega^\ast}(Z) \sim P_X$. Substituting $\omega = \omega^\ast$ into \eqref{eq:11} yields
\[
\lim_{n,b \rightarrow \infty} L(\omega^\ast) = \frac{1}{m} \sum_{s=1}^m \fdiv\left(P_{\btheta_s^T X + \rN} \| P_{\btheta_s^T X + \rN}\right) = 0,
\]
where equality with zero follows because the $f$-divergence nullifies if its arguments are the same distribution.

For the second case, we assume that for some $\omega^\ast \in \Omega$
\[
\lim_{m,n,b \rightarrow \infty} L(\omega^\ast) = \lim_{m\rightarrow \infty }\frac{1}{m} \sum_{s=1}^m \fdiv\left(P_{\btheta_s^T G_{\omega^\ast}(Z) + \rN} \| P_{\btheta_s^T X + \rN}\right) =0. 
\]
Since the $\fdiv\left(P_{\btheta_s^T G_{\omega^\ast}(Z) + \rN} \| P_{\btheta_s^T X + \rN}\right)$ terms are all $\geq 0$, the fact that this limit exists implies that
\begin{align*}
\lim_{m,n,b \rightarrow \infty} L(\omega^\ast) 
&= \frac{1}{\rm{vol}(\mathbb{S}_k(\Reals^d))}\int_{\btheta \in \mathbb{S}_k(\Reals^d)} \fdiv(P_{\btheta^T G_{\omega^\ast}(Z) + \rN} \| P_{\btheta^T G_{\omega^\ast}(Z) + \rN}) \dif \btheta \\
&= \sfdiv(P_{G_{\omega^\ast}(Z)}\|P_{\rX}).
\end{align*}
By Proposition \ref{prop::SD_zero}, this can equal zero only if $P_{G_{\omega^\ast}(Z)} = P_{\rX}$.
\end{proof}

\section{Details on the Experimental Results}
\label{append::experiment}

\paragraph{Datasets} We demonstrate our method and baselines using the US Census data derived from the American Community Survey (ACS) Public Use Microdata Sample (PUMS). Using the API of the Folktables package \citep{ding2021retiring}, we access the 2018 California data. Additionally, Folktables package provides five prediction tasks (\texttt{Income} , \texttt{Coverage}, \texttt{Mobility}, \texttt{Employment}, \texttt{TravelTime}) based on a target column and a set of mixed-type features. We provide more details about these datasets in Table~\ref{tab:data}. 
\begin{table*}[t]
\small
\centering
\resizebox{0.7\textwidth}{!}{
\renewcommand{\arraystretch}{1.25}
\begin{tabular}{lcccc}
\toprule
Dataset & \#Records & \#Columns & \#Categorical Cols & \#Numerical Cols\\
\midrule
\texttt{Income} & 195,665 & 10 & 8 & 2\\
\midrule
\texttt{Coverage} & 138,554 & 19 & 17 & 2\\
\midrule
\texttt{Mobility} & 80,329 & 21 & 19 & 2\\
\midrule
\texttt{Employment} & 378,817 & 16 & 15 & 1\\
\midrule
\texttt{TravelTime} & 172,508 & 16 & 14 & 2\\
\bottomrule 
\end{tabular}
}
\caption{Dataset details. In addition to the described columns, each dataset has a binary outcome column, which is used by the evaluation metric \texttt{LogitRegression}.}
\label{tab:data}
\end{table*}

\paragraph{Data pre-processing.} 
We pre-process each dataset using the open-source Python library \citep{opendp2023}. It normalizes numerical columns and one-hot encodes categorical columns, all within a privacy-preserving way using an $\epsilon = 0.5$. Note that the pre-processing budget is not included in the stated $\epsilon$ in our results. To satisfy the bounded norm assumption in our privacy analysis, we normalize the resulting dataset so that the largest row 2-norm is upper bounded by $1$. This can be achieved by e.g., normalizing each row by $1/\sqrt{d}$, where $d$ is the number of columns. We subsample each dataset at a rate of $0.25$ using the Poisson mechanism, which amplifies the DP guarantees according to Lemma~\ref{lem:amplification}. For all our experiments, we report the amplified $\epsilon$.

\paragraph{More details.} For our method and \texttt{SliceWass}, all experiments used batch size of 128 and learning rate $2\cdot 10^{-5}$, and ran for 200 epochs. 
For our method and baselines, each model was trained using a V100 GPU, with runtimes typically less than 2 hours for our method (200 epochs).

\paragraph{Additional results.} We repeat our experiments with a different privacy budget $\epsilon = 8.1$ and report the results in Table~\ref{tabel:gan1}. 
\begin{table*}[t]
\small
\centering
\resizebox{0.99\textwidth}{!}{
\renewcommand{\arraystretch}{1.25}
\begin{tabular}{llccccc}

\toprule

&   & \multicolumn{2}{c}{Single Attribute Similarity} & \multicolumn{2}{c}{Pairwise Attribute Similarity} & \multicolumn{1}{c}{Classifier F1 Score}

\\
\cmidrule(lr){3-4} \cmidrule(lr){5-6} \cmidrule(lr){7-7}

Dataset  & DP Mechanism & \texttt{KSComp} & \texttt{TVComp} & \texttt{ContSim} & \texttt{CorrSim} & \texttt{LogitRegression}\\

\midrule
$\texttt{Income}$    & Algorithm~\ref{alg:dp_Gen}              & 0.40 \scriptsize{$\pm 0.03$}                    & \textbf{0.73} \scriptsize{$\pm 0.01$}              & \textbf{0.37} \scriptsize{$\pm 0.01$}                       & 0.94 \scriptsize{$\pm 0.01$}                       & 0.35 \scriptsize{$\pm 0.17$}                                
\\
\cmidrule(lr){2-7}
                    & \texttt{SliceWass}          & 0.25  \scriptsize{$\pm 0.01$}                 & 0.61  \scriptsize{$\pm 0.01$}             & 0.27  \scriptsize{$\pm 0.00$}                     &\textbf{0.98}  \scriptsize{$\pm 0.01$}                & 0.17  \scriptsize{$\pm 0.03$}                   
\\
\cmidrule(lr){2-7}
                     & \texttt{DP-SGD}             & 0.30  \scriptsize{$\pm 0.10$}                 & 0.37  \scriptsize{$\pm 0.02$}             & 0.09  \scriptsize{$\pm 0.02$}                     &0.91  \scriptsize{$\pm 0.06$}                & 0.15  \scriptsize{$\pm 0.30$}                                         
\\
\cmidrule(lr){2-7}
                    & \texttt{PATE}             & 0.19  \scriptsize{$\pm 0.09$}                 & 0.40  \scriptsize{$\pm 0.05$}             & 0.12  \scriptsize{$\pm 0.03$}                     &0.95  \scriptsize{$\pm 0.02$}                & \textbf{0.38}  \scriptsize{$\pm 0.21$}
\\
\cmidrule(lr){2-7}
                    & \texttt{MERF}             & \textbf{0.82}  \scriptsize{$\pm0.01 $}                 & 0.47  \scriptsize{$\pm 0.03$}             & 0.08  \scriptsize{$\pm 0.01$}                     & 0.60 \scriptsize{$\pm 0.14$}          & 0.30  \scriptsize{$\pm 0.12$}

\\
\midrule
$\texttt{Coverage}$ & Algorithm~\ref{alg:dp_Gen}              & \textbf{0.72} \scriptsize{$\pm 0.02$}                    & \textbf{0.87} \scriptsize{$\pm 0.01$}              & \textbf{0.63} \scriptsize{$\pm 0.01$}                       & \textbf{0.88} \scriptsize{$\pm 0.06$}                       & {0.41} \scriptsize{$\pm 0.07$}                     
\\
\cmidrule(lr){2-7}
                    & \texttt{SliceWass}          & 0.71  \scriptsize{$\pm 0.04$}                 & 0.86  \scriptsize{$\pm 0.01$}             & 0.62  \scriptsize{$\pm 0.01$}                     &0.87  \scriptsize{$\pm 0.03$}                & 0.45  \scriptsize{$\pm 0.10$}                      
\\
\cmidrule(lr){2-7}
                    & \texttt{DP-SGD}             & 0.46  \scriptsize{$\pm 0.02$}                 & 0.71  \scriptsize{$\pm 0.03$}             & 0.42  \scriptsize{$\pm 0.05$}                     &0.52  \scriptsize{$\pm 0.11$}                & 0.27  \scriptsize{$\pm 0.19$}                                                             
\\
\cmidrule(lr){2-7}
                     & \texttt{PATE}             & 0.32  \scriptsize{$\pm 0.03$}                 & 0.51  \scriptsize{$\pm 0.05$}             & 0.25  \scriptsize{$\pm 0.05$}                     &0.83  \scriptsize{$\pm 0.03$}                & \textbf{0.50}  \scriptsize{$\pm 0.02$}

\\
\cmidrule(lr){2-7}
                    & \texttt{MERF}             & 0.44  \scriptsize{$\pm0.15 $}                 & 0.52  \scriptsize{$\pm 0.01$}             & 0.18  \scriptsize{$\pm 0.01$}                     & 0.61 \scriptsize{$\pm 0.003$}          & 0.48  \scriptsize{$\pm 0.10$}                                    
                     
\\
\midrule
$\texttt{Mobility}$  & Algorithm~\ref{alg:dp_Gen}              & {0.71} \scriptsize{$\pm 0.02$}                    & \textbf{0.85} \scriptsize{$\pm 0.01$}              & \textbf{0.50} \scriptsize{$\pm 0.01$}                       & \textbf{0.86} \scriptsize{$\pm 0.01$}                       & \textbf{0.70} \scriptsize{$\pm 0.01$}                     
\\
\cmidrule(lr){2-7}
                    & \texttt{SliceWass}          & \textbf{0.76}  \scriptsize{$\pm 0.04$}                 & 0.85  \scriptsize{$\pm 0.02$}             & 0.50  \scriptsize{$\pm 0.02$}                     &0.86  \scriptsize{$\pm 0.04$}                & 0.65  \scriptsize{$\pm 0.05$}                                        
\\
\cmidrule(lr){2-7}
 & \texttt{DP-SGD}             & 0.52  \scriptsize{$\pm 0.06$}                 & 0.72  \scriptsize{$\pm 0.06$}             & 0.35  \scriptsize{$\pm 0.06$}                     &0.72  \scriptsize{$\pm 0.15$}                & 0.62  \scriptsize{$\pm 0.20$}      
                           
\\
\cmidrule(lr){2-7}
& \texttt{PATE}             & 0.08  \scriptsize{$\pm 0.04$}                 & 0.53  \scriptsize{$\pm 0.03$}             & 0.22  \scriptsize{$\pm 0.02$}                     &0.83  \scriptsize{$\pm 0.01$}                & 0.54  \scriptsize{$\pm 0.40$}  
                                   
\\
\cmidrule(lr){2-7}
                    & \texttt{MERF}             & 0.33  \scriptsize{$\pm0.09 $}                 & 0.58  \scriptsize{$\pm 0.01$}             & 0.20  \scriptsize{$\pm 0.01$}                     & 0.68 \scriptsize{$\pm 0.03$}          & 0.19  \scriptsize{$\pm 0.08$}                                    

\\
\midrule
$\texttt{Employment}$    & Algorithm~\ref{alg:dp_Gen}              & 0.67 \scriptsize{$\pm 0.00$}                    & \textbf{0.83} \scriptsize{$\pm 0.00$}              & \textbf{0.63} \scriptsize{$\pm 0.00$}                       & -                       & 0.52 \scriptsize{$\pm 0.00$}                                    \\
\cmidrule(lr){2-7}
                    & \texttt{SliceWass}          & 0.75  \scriptsize{$\pm 0.00$}                 & 0.81  \scriptsize{$\pm 0.00$}             & 0.60  \scriptsize{$\pm 0.00$}                     & -          & 0.49  \scriptsize{$\pm 0.00$}                        

\\
\cmidrule(lr){2-7}
                     & \texttt{DP-SGD}             & 0.52  \scriptsize{$\pm 0.07$}                 & 0.61  \scriptsize{$\pm 0.05$}             & 0.33  \scriptsize{$\pm 0.05$}                     & -         & 0.29  \scriptsize{$\pm 0.34$}                                            
\\
\cmidrule(lr){2-7}
                    & \texttt{PATE}             & 0.47  \scriptsize{$\pm 0.08$}                 & 0.49  \scriptsize{$\pm 0.03$}             & 0.26  \scriptsize{$\pm 0.02$}                     & -          & 0.51  \scriptsize{$\pm 0.19$}                                    
\\
\cmidrule(lr){2-7}
                    & \texttt{MERF}             & \textbf{0.95}  \scriptsize{$\pm0.02 $}                 & 0.65  \scriptsize{$\pm 0.04$}             & 0.32  \scriptsize{$\pm 0.03$}                     & -          & \textbf{0.65}  \scriptsize{$\pm 0.05$}                                    
                    
\\
\midrule
$\texttt{TravelTime}$    & Algorithm~\ref{alg:dp_Gen}              & 0.49 \scriptsize{$\pm 0.01$}                    & \textbf{0.75} \scriptsize{$\pm 0.02$}              & \textbf{0.45} \scriptsize{$\pm 0.02$}                       & {0.78} \scriptsize{$\pm 0.03$}                       & {0.39} \scriptsize{$\pm 0.15$}                                     \\
\cmidrule(lr){2-7}
                    & \texttt{SliceWass}          & 0.30  \scriptsize{$\pm 0.02$}                 & 0.62  \scriptsize{$\pm 0.02$}             & 0.33  \scriptsize{$\pm 0.01$}                     &\textbf{0.93}  \scriptsize{$\pm 0.01$}                & 0.46  \scriptsize{$\pm 0.17$}                          

\\
\cmidrule(lr){2-7}
                     & \texttt{DP-SGD}             & 0.41  \scriptsize{$\pm 0.06$}                 & 0.54  \scriptsize{$\pm 0.01$}             & 0.22  \scriptsize{$\pm 0.01$}                     &0.82  \scriptsize{$\pm 0.12$}                & 0.11  \scriptsize{$\pm 0.21$}                                             
\\
\cmidrule(lr){2-7}
                    & \texttt{PATE}             & 0.42  \scriptsize{$\pm 0.10$}                 & 0.39  \scriptsize{$\pm 0.03$}             & 0.14  \scriptsize{$\pm 0.02$}                     &0.88  \scriptsize{$\pm 0.04$}                & \textbf{0.51}  \scriptsize{$\pm 0.16$}                                       
\\
\cmidrule(lr){2-7}
                    & \texttt{MERF}             & \textbf{0.56}  \scriptsize{$\pm0.08 $}                 & 0.37  \scriptsize{$\pm 0.03$}             & 0.08  \scriptsize{$\pm 0.01$}                     & 0.62 \scriptsize{$\pm 0.01$}          & 0.11  \scriptsize{$\pm 0.18$}                                    
                      
\\ 
\bottomrule 
\end{tabular}
}
\caption{We repeat the experiment in Table~\ref{tabel:benchmark1} with a different privacy budget $\epsilon = 8.1$. 
}
\label{tabel:gan1}
\end{table*}

%

%


\end{document}